\newcommand{\twomax}{\textsc{Two\-Max}\xspace}
\newcommand{\onemax}{\textsc{One\-Max}\xspace}
\newcommand{\zeromax}{\textsc{Zero\-Max}\xspace}
\newcommand{\poly}[1]{\mathrm{poly}(#1)}
\newcommand{\di}{\mathrm{d}}
\newcommand{\dist}[2]{\di\mathord{\left(#1,#2\right)}}
\newcommand{\hamm}{\mathrm{H}}
\newcommand{\Hamm}[2]{\hamm\mathord{\left(#1,#2\right)}}
\newcommand{\ones}[1]{\left|#1\right|_1}
\newcommand{\Real}{\mathbb{R}\xspace}
\newcommand{\muea}{{\upshape{}($\mu$+1)~EA}\xspace}
\newcommand{\eaoneone}{{\upshape{}(1+1)~EA}\xspace}
\newcommand{\Bigo}[1]{\mathrm{O}\mathord{\left(#1\right)}\xspace}
\newcommand{\Smallo}[1]{\mathrm{o}\mathord{\left(#1\right)}\xspace}
\newcommand{\Bigom}[1]{\Omega\mathord{\left(#1\right)}\xspace}
\newcommand{\Smallom}[1]{\omega\mathord{\left(#1\right)}\xspace}
\newcommand{\Bigth}[1]{\Theta\mathord{\left(#1\right)}\xspace}
\newtheorem{theorem}{Theorem}
\newtheorem{lemma}[theorem]{Lemma}
\newtheorem{definition}[theorem]{Definition}
\newcommand{\ie}{i.\,e.,\xspace}
\newcommand{\eg}{e.\,g.,\xspace}
\newcommand{\e}{\mathrm{E}}
\newcommand{\E}[1]{\e\mathord{\left(#1\right)}}
\newcommand{\prob}{\mathrm{Pr}}
\newcommand{\Prob}[1]{\prob\mathord{\left(#1\right)}}
\newcommand{\Bin}[1]{\mathrm{Bin}\mathord{\left(#1\right)}\xspace}
\newcommand{\IFTHEN}[2]{\STATE \algorithmicif\ #1\ \algorithmicthen\ #2\ }
\newcommand{\ignore}[1]{}
\newcommand{\EA}{\eaoneone}
\newcommand{\lone}{x}
\pgfplotsset{compat=newest}
\pgfplotsset{
	enlargelimits=true,
  	grid=major,
  	scale only axis,
}
    \pgfmathfloatmultiply{\pgfmathresult}{#2}%
    \pgfmathfloatifapproxequalrel{\pgfmathresult}{#2}{\def\pgfmathresult{3}}{}%
\renewcommand{\IFTHEN}[2]{\STATE \algorithmicif\ #1\ \algorithmicthen\ #2\ \algorithmicendif}
\renewcommand{\E}[1]{\e\mathord{\left[#1\right]}}
\newcommand{\harm}[1]{\mathrm{H}_{#1}}
\begin{document}


\title{\bf Runtime Analysis of Restricted Tournament Selection for Bimodal Optimisation}

\author{Edgar Covantes Osuna \\ School of Engineering and Sciences, Tecnol\'{o}gico de Monterrey,\\ Nuevo Le\'{o}n, M\'{e}xico.\\
\and 
Dirk Sudholt\\
Department of Computer Science, University of Sheffield, 
        United Kingdom and\\
        Chair of Algorithms for Intelligent Systems, University of Passau, Germany.\\ 
}

\maketitle

\begin{abstract}
Niching methods have been developed to maintain the population diversity, to investigate many peaks in parallel and to reduce the effect of genetic drift.
We present the first rigorous runtime analyses of restricted tournament selection (RTS), embedded in a \muea, and analyse its effectiveness at finding both optima of the bimodal function \twomax. In RTS, an offspring competes against the closest individual, with respect to some distance measure, amongst~$w$ (window size) population members (chosen uniformly at random with replacement), to encourage competition within the same niche. We prove that RTS finds both optima on \twomax efficiently if the window size~$w$ is large enough. However, if~$w$ is too small, RTS fails to find both optima even in exponential time, with high probability. 
We further consider a variant of RTS selecting individuals for the tournament \emph{without} replacement.
It yields a more diverse tournament and is more effective at preventing one niche from taking over the other. However, this comes at the expense of a slower progress towards optima when a niche collapses to a single individual. Our theoretical results are accompanied by experimental studies that shed light on parameters not covered by the theoretical results and support a conjectured lower runtime bound. 
\end{abstract}





\section{Introduction}
\label{sec:intro}

One of the major challenges when using Evolutionary Algorithms (EAs) is to maintain the diversity in the population in order to prevent premature convergence.
One way of maintaining diversity is to use niching methods, which are based on the mechanics of natural ecosystems~\citep{Shir2012}. Niches can be viewed as subspaces in the environment that can support different types of life. A specie is defined as a group of individuals with similar features, capable of interbreeding among themselves, but unable to breed with individuals outside their group. Species can be defined as similar individuals of a specific niche in terms of similarity metrics. In evolutionary algorithms the term niche is used for the search space domain, and species for the set of individuals with similar characteristics. 

Niching methods have been developed to reduce the effect of genetic drift resulting from the selection operator in standard EAs, to maintain the population diversity, and to allow the EA to investigate many peaks simultaneously, thus avoiding getting trapped in local optima before the fitness landscape is explored properly~\citep{Sareni1998}. This is often done by modifying the selection process of individuals, taking into account not only the value of the fitness function but also the distribution of individuals in the space of genotypes or phenotypes~\citep{Glibovets2013}. 
Many niching techniques have been introduced to solve problems where it is necessary to identify multiple optima (multimodal problems), either local or global optima. The main goal for these techniques is to form and maintain multiple, diverse, final solutions for an exponential to infinite time period with respect to population size, whether these solutions are of identical fitness or of varying fitness~\citep{Shir2012,Crepinsek2013,Glibovets2013,Squillero2016}.
Given such a variety of mechanisms to choose from, it is often not clear which mechanism is the best choice for a particular problem.

Most of the analyses and comparisons made between niching methods used empirical investigations on benchmark functions~\citep{Sareni1998,Singh2006}. Theoretical runtime analyses have been performed that rigorously quantify the expected time needed to find one or several global optima~\citep{Friedrich2009,Oliveto2019,Covantes2019b,Covantes2019}. One example where theoretical results are used to inform the choice of the EA parameters' values in an empirical performance study of EAs with diversity mechanisms can be found in~\cite{Covantes2018b}. Both approaches are important to understand how these mechanisms impact the performance of EAs and whether and how they enhance the search for good individuals. These different approaches can help to explain when a niching mechanism should be used, which niching mechanism works best, and how to set parameters. 

Previous theoretical studies~\citep{Friedrich2009,Oliveto2019,Covantes2019b,Covantes2019} compared the expected running time of different diversity mechanisms when embedded in a simple baseline EA, the \muea. All mechanisms were considered on the well-known bimodal function $\twomax(x):=\max\left\{n-\sum_{i=1}^{n}x_i,\sum_{i=1}^{n}x_i\right\}$. \twomax consists of two different symmetric slopes (or branches) \zeromax and \onemax with $0^n$ and $1^n$ as global optima, respectively, and the goal is to evolve a population that contains both optima\footnote{In \cite{Friedrich2009} an additional fitness value for $1^n$ was added to distinguish between a local optimum $0^n$ and a unique global optimum. There the goal was to find the global optimum, and all approaches had a baseline probability of $1/2$ of climbing up the right branch by chance. We use the same approach as~\cite{Oliveto2019,Covantes2019b}, and consider the original definition of \twomax and the goal of finding both global optima. The discussion and presentation of previous work from~\cite{Friedrich2009} is adapted to our setting. We refer to~\cite{Sudholt2020} for details.}.

\twomax was chosen because it is simply structured, hence facilitating a theoretical analysis, and it is hard for EAs to find both optima as they have the maximum possible Hamming distance. The results allowed for a fair comparison across a wide range of diversity mechanisms, revealing that some mechanisms like fitness diversity or avoiding genotype duplicates and probabilistic crowding perform badly, while other mechanisms like fitness sharing, clearing or deterministic crowding perform surprisingly well (see Table~\ref{tab:divmech} and Section~\ref{sec:previous-work}). 

We contribute to this line of work by studying the performance of the crowding mechanism called \emph{restricted tournament selection (RTS)}. This mechanism is a well-known technique as covered in tutorials and surveys for diversity-preserving mechanisms~\citep{Shir2012,Crepinsek2013,Glibovets2013,Squillero2016} and compared in empirical investigations \citep{Sareni1998,Singh2006,Covantes2018b}. 

Restricted tournament selection is a modification of the classical tournament selection for multimodal optimisation that exhibits niching capabilities. For each offspring, RTS sets up a tournament of $w$ (window size) individuals, chosen uniformly at random with replacement\footnote{We believe that the original definition of RTS picks individuals \emph{with} replacement, that is, it is possible to select multiple copies of one individual, ending up with fewer than $w$ different genotypes. The exact implementation was not explained in~\cite{Harik1995}, however most calculations in~\cite{Harik1995} assume selection with replacement. Oddly enough, from all cited papers here, where RTS has been analysed or used, only \cite{Garcia2012} make explicit mention of the selection policy used to select the $w$ individuals (uniformly at random with replacement). We consider a variant \emph{without} replacement in Section~\ref{sec:rts_nor}.} from the population. The offspring competes against the closest individual with respect to some distance measure from the tournament and the best individual is selected for the next generation. This form of tournament restricts an entering individual from competing with others too different from it \citep{Harik1995}. RTS has been analysed empirically for the classical comparison between crowding mechanisms for multimodal optimisation as a replacement strategy~\citep{Qu2010,Sareni1998,Singh2006}. Recent applications for engineering problems with multimodal domains include facility layout design~\citep{Garcia2015} and the design of product lines~\citep{Tsafarakis2016} with reported better results compared to the other variants without RTS. 

However, we are lacking a good understanding of when and why it performs well and how it compares to diversity mechanisms analysed previously. Our contribution is to provide a rigorous theoretical runtime analysis accompanied by experimental studies for this mechanism in the context of the \muea on \twomax, to rigorously assess its performance in comparison to other diversity mechanisms. In addition, our goal is to provide insights into the working principles of this mechanism to narrow the gap between theory and practice, and to enhance our understanding of its strengths and weaknesses.

\subsection{Our contribution}

For the \muea with RTS, we show in Section~\ref{sec:rtswitrep} that the mechanism succeeds in finding both optima of \twomax in the same way as deterministic crowding, provided that the window size~$w$ is chosen large enough, in  time $\Bigo{\mu n \log{n}}$ with high probability (Section~\ref{sec:large-w}). 
We also show that, if the window size is too small, then it cannot prevent one branch taking over the other, leading to exponential running times with high probability (Section~\ref{sec:small-w}).

We further consider a variant of RTS, where the tournament chooses individuals \emph{without replacement}. This simple change tends to make the tournament more diverse. We investigate its effect on the performance of the \muea in Section~\ref{sec:rts_nor}. We show that the positive result for RTS with replacement holds for its variant without replacement for a weaker condition $w\geq \mu$ (Section~\ref{sec:large_w_rts_nor}). For small window sizes, the situation is less clear as the previous analysis for RTS with replacement breaks down. We show that RTS without replacement will find both optima with certainty once two subpopulations have evolved whose best fitness is above a certain threshold value. This means that takeover cannot happen during the final stages of a run where both branches are being explored. However, during early stages of a run, takeover can still happen under rare conditions. 

We also find that, surprisingly, in runs where both optima are found, RTS without replacement seems to take significantly more time. This is because typically a population will evolve until only a single individual remains on one branch, and this individual takes time $\Theta((\mu^2/w) \cdot n \log n)$ to evolve a global optimum, under certain assumptions (see Section~\ref{sec:small_w_rts_nor}). This is by a factor of order $\mu/w$ larger than the upper bound of $\Bigo{\mu n \log n}$ for RTS with replacement from Section~\ref{sec:large-w}.

Our theoretical results are accompanied by experimental studies that match the theoretical results and also shed light on parameters not covered by the theoretical results. We performed experiments for RTS with replacement and its variant without replacement in order to observe different aspects of the algorithms such as their ability to find both optima on \twomax (Sections~\ref{sec:well-known-time-budget} and \ref{sec:extra_time_budget}). We further support the conjectured lower bound of $\Bigom{(\mu^2/w) \cdot  n \log n}$ for small values of~$w$ by measuring the time needed for RTS without replacement to find both optima on \twomax (Section~\ref{sec:runtime_growth}). Finally, we assess when takeover is more likely to happen and which variant is more resilient to takeover (Section~\ref{sec:exp_takeover}).

This article significantly extends a preliminary conference paper~\citep{Covantes2018a} that contained preliminary theoretical results for the original RTS (selecting the tournament with replacement) and preliminary experimental results. In this manuscript, the negative result for RTS with small~$w$ in \cite{Covantes2018a} has been improved.

\section{Previous Work and Preliminaries}
\label{sec:previous-work}

There has been a line of work comparing various diversity mechanisms on \twomax in the context of the simple \muea (see Algorithm~\ref{alg:muea}). The \muea starts with a population of size~$\mu$ created uniformly at random and generates one offspring via mutation; the resulting offspring competes with an individual selected uniformly at random from the subpopulation with worst fitness and the best individual replaces the worst (in case of ties, the offspring is preferred). Table~\ref{tab:divmech} summarises all known results, including our contributions (shown in bold) and conditions involving population size~$\mu$ and specific parameters of each diversity mechanism explained below. Results from~\cite{Friedrich2009} are adapted to our definition of \twomax; see~\cite{Sudholt2020} for details.

\begin{algorithm}[!ht]
  \begin{algorithmic}[1]
  	\STATE Initialise $P$ with $\mu$ individuals chosen uniformly at random
    \WHILE{optimum \NOT found} 
    	\STATE{Choose $x \in P$ uniformly at random} 
    	\STATE{Create $y$ by flipping each bit in $x$ independently with probability $1/n$.}
        \STATE{Choose $z \in P$ uniformly at random from all individuals with worst fitness in $P$.}
        \IFTHEN{$f(y) \geq f(z)$}{$P = P \setminus \{z\}\cup \{y\}$}
    \ENDWHILE
  \end{algorithmic}
  \caption{\muea}
  \label{alg:muea}
\end{algorithm}

\begin{table}[!ht]
	\centering
	\caption{Overview of runtime analyses for the \muea with diversity mechanisms on \twomax, showing the probability of finding both optima within (expected) time $\Bigo{\mu n \log n}$. Results derived in this paper are shown in bold.}
    \label{tab:divmech}
    \begin{threeparttable}
    \begin{tabular}{lcl}
      \hline
      \bf Diversity Mechanism&\bf Success prob. &\bf Conditions\\
      \hline
      Plain \muea \tnote{1} & $\Smallo{1}$& $\mu=\Smallo{n/\log{n}}$\\
      \midrule
      No Duplicates \tnote{1} & & \\
      \quad Genotype &$\Smallo{1}$&$\mu=\Smallo{\sqrt{n}}$\\
      \quad Fitness &$\Smallo{1}$&$\mu=\poly{n}$\\
      \midrule
      Deterministic Crowding \tnote{1} &$1-2^{-\mu+1}$&all $\mu$\\
      \midrule
      Fitness Sharing ($\sigma=n/2$) & & \\
      \quad Population-based \tnote{1} &$1$&$\mu\geq 2$\\
      \quad Individual-based \tnote{2} &$1$&$\mu\geq 3$\\
      \midrule
      Clearing ($\sigma=n/2$) \tnote{3} &$1$&$\mu\geq\kappa n^2$\\
      \midrule
      Probabilistic Crowding \tnote{4}&$2^{-\Bigom{n}}$& all $\mu$\\
      \midrule
      Probabilistic Crowding with Scaling \tnote{4} & &\\
      \quad General bases $\alpha$ & $2^{-\Bigom{n/\alpha}}$ & all $\alpha \ge 1$\\
      \quad Very large $\alpha$ & $1-2^{-\mu+1}$ & $\alpha \ge (1\!+\!\Bigom{1})en$\\
      \midrule
      Generalised Crowding \tnote{4} & &\\
      \quad General scaling factors $\phi$ & $2^{-\Bigom{\phi n}}$ & all $\phi \le 1$\\
      \quad Very small $\phi$ & $1-2^{-\mu+1}$ & $\phi \le \frac{1 - \Bigom{1}}{e^2n}$\\
      \midrule
      \textbf{RTS with replacement}&&\\
      \quad \textbf{Small window size \boldmath$w$ (Theorem~\ref{the:badresrts})}&\boldmath$\Smallo{1}$&\boldmath$\mu=\Smallo{n^{1/w}}$\\
      \quad \textbf{Large window size \boldmath$w$ (Theorem~\ref{the:positive-result-for-rts})}&\boldmath$1-2^{-\mu'+3}$&\boldmath $w \ge 2.5\mu \ln{n}$\\
      \midrule
      \textbf{RTS without replacement}&&\\
      \quad \textbf{Large window size \boldmath$w$ (Theorem~\ref{the:positive-result-for-rts-without-replacement})}&\boldmath$1-2^{-\mu'+3}$&\boldmath $w \ge \mu$\\
      \bottomrule
    \end{tabular}
    \begin{tablenotes}
    \footnotesize \item[1] \cite{Friedrich2009}.
    \footnotesize \item[2] \cite{Oliveto2019}.
    \footnotesize \item[3] \cite{Covantes2019b}.
    \footnotesize \item[4] \cite{Covantes2019}.
\end{tablenotes}
    \end{threeparttable}
\end{table}

The notion of success in previously studied mechanisms was being able to find both optima in (expected) time $\Bigo{\mu n \log n}$.
To put this time bound in context, it is easy to see that the simple \EA finds one optimum of \twomax in expected time $\Bigth{n \log n}$. The time bound $\Bigo{\mu n \log n}$ includes an additional factor of~$\mu$ to account for the overhead of evolving $\mu$ individuals instead of one. This overhead is necessary for mechanisms like deterministic crowding as used in~\cite{Friedrich2009}, which essentially evolves $\mu$ independent lineages of \EA{}s~\citep{Friedrich2009}.

Table~\ref{tab:divmech} shows that not all mechanisms succeed in finding both optima on \twomax in (expected) time $\Bigo{\mu n \log n}$. \cite{Friedrich2009} showed that the plain \muea and the simple mechanisms like \emph{avoiding genotype} or \emph{fitness duplicates} are not able to prevent the extinction of one branch, ending with the population converging to one optimum, with high probability. \emph{Deterministic crowding} with a sufficiently large population is able to reach both optima with probability $1-2^{-\mu+1}$ in expected time $\Bigo{\mu n \log n}$. This probability converges to~1 exponentially fast in~$\mu$; for instance, a small population size of $\mu=10$ already gives a success probability of $\approx\! 0.998$ and for $\mu=30$ it grows to $\approx\! 0.9999999981$.
A \emph{population-based fitness sharing} approach, constructing the best possible new population amongst parents and offspring, with $\mu\geq 2$ and a sharing radius of $\sigma=n/2$ is able to find both optima in expected runtime $\Bigo{\mu n \log n}$. The drawback of this approach is that all possible size $\mu$ subsets of this union of size $\mu+\lambda$ (where $\lambda$ is the offspring population size) need to be examined. This is prohibitive for large $\mu$ and~$\lambda$.

\cite{Oliveto2019} studied the original \emph{fitness sharing} approach and showed that a population size $\mu=2$ is not sufficient to find both optima in polynomial time; the success probability is only $1/2-\Bigom{1}$. However, with $\mu\geq 3$ fitness sharing again finds both optima in expected time $\Bigo{\mu n\log{n}}$. \cite{Covantes2019b} analysed the \emph{clearing} mechanism and showed that it can optimise all functions of unitation---function defined over the number of 1-bits contained in a string---in expected time $\Bigo{\mu n\log{n}}$ when the distance function and parameters like the clearing radius $\sigma$, the niche capacity $\kappa$ (how many winners a niche can support) and $\mu$ are chosen appropriately. In the case of large niches, that is, with a clearing radius of $\sigma=n/2$, it is able to find both optima in expected time $\Bigo{\mu n\log{n}}$. 

Finally, \cite{Covantes2019} analysed \emph{probabilistic crowding}~\citep{Mengsheol1999} (extending preliminary results from~\cite{Covantes2018a}, which are not included in this manuscript) and showed that it requires exponential time with overwhelming probability on \twomax and general classes of functions with bounded gradients. They showed that probabilistic crowding is unable to evolve solutions that are significantly closer to any global optimum than those found by random search, even when given exponential time $2^{\Bigom{n}}$ for all population sizes.
Probabilistic crowding requires exponential time even when applying exponential scaling to \twomax: for every constant base $\alpha$, and even values up to $\alpha=\Bigo{n^{1-\varepsilon}}$, on the function $\alpha^{\twomax(x)}$, the selection pressure is still too low, leading to an exponential time $2^{\Bigom{n/\alpha}}$ with overwhelming probability. Only when $\alpha=\Bigom{n}$, the selection pressure becomes large enough to enable hill climbing. In this case, probabilistic crowding with scaling is as successful on \twomax as deterministic crowding.

Generalised crowding~\citep{Galan2010} is a variant that generalises both deterministic and probabilistic crowding through the choice of a parameter called \emph{scaling factor $\phi\in [0,1]$} that diminishes the impact of the inferior search point. With $\phi = 1$ we have probabilistic crowding and $\phi=0$ yields deterministic crowding. \cite{Covantes2019} showed that if $\phi = \Bigom{n^{-1+\varepsilon}}$, for a constant $\varepsilon > 0$, then this gives exponential time with overwhelming probability on all functions with bounded gradients. But if $\phi = \Bigo{1/n}$, generalised crowding behaves similarly to deterministic crowding and becomes effective on \twomax.

The above works did not consider crossover as recombining individuals from different branches is likely to create poor offspring. We therefore consider a \muea using mutation only and we introduce RTS into the definition of the \muea. In RTS a new offspring competes with the closest element with respect to some distance measure from $w$ (\emph{window size}) members selected uniformly at random, with replacement, from the population, and the better individual from this competition is selected. The \muea with RTS (Algorithm~\ref{alg:muearestouse}) is defined similarly as the plain \muea (see Algorithm~\ref{alg:muea}), the \muea with deterministic crowding in \cite{Friedrich2009} and the \muea with probabilistic crowding in \cite{Covantes2018a} to facilitate comparisons between all the algorithms and all available results.

In Algorithm~\ref{alg:muearestouse} an individual~$x$ is selected uniformly at random as a parent and a new individual~$y$ is created in the mutation step. Since we are not considering crossover and only one individual is created, $w$ individuals are selected uniformly at random with replacement and stored in a temporary population~$P^{*}$. Then in Line~\ref{alg:rts:ln:dis} an individual~$z$ is selected from population~$P^{*}$ with the minimum distance from~$y$ (ties are broken uniformly at random), and if the individual~$y$ has a fitness at least as good as~$z$, $y$ replaces~$z$.

\begin{algorithm}[!ht]
  \begin{algorithmic}[1]
    \STATE{Initialise $P$ with $\mu$ individuals chosen uniformly at random}
    \WHILE{stopping criterion \NOT met}
    	\STATE{Choose $x \in P$ uniformly at random}
        \STATE{Create $y$ by flipping bits in $x$ independently with probability~$1/n$.}
        \STATE{Select~$w$ individuals uniformly at random, with replacement, from~$P$ and store them in~$P^{*}$.}
        \STATE{Choose $z\in P^{*}$ with the minimum distance to $y$.}\label{alg:rts:ln:dis}
        \IFTHEN{$f(y) \geq f(z)$}{$P = P \setminus \{z\} \cup \{y\}$}
    \ENDWHILE
  \end{algorithmic}
  \caption{\muea with restricted tournament selection}
  \label{alg:muearestouse}
\end{algorithm}

As distance functions~$\dist{\cdot}{\cdot}$ we consider genotypic or Hamming distance, defined as the number of bits that have different values in $x$ and $y$: $\dist{x}{y}:=\Hamm{x}{y}:=\sum_{i=0}^{n-1}|x_i - y_i|$, and phenotypic distances as in~\cite{Friedrich2009,Oliveto2019,Covantes2019b} based on the number of ones: $\dist{x}{y}:=|\ones{x}-\ones{y}|$ where $\ones{x}$ and $\ones{y}$ denote the number of 1-bits in individual $x$ and $y$, respectively.

\subsection{Notation}
\label{sec:notation}
Our notion of time is defined as the number of function evaluations before the \muea achieves a stated goal such as finding a global optimum or finding both optima of \twomax. Since the \muea is initialised with $\mu$ individuals, and subsequently generates one offspring in each generation, the number of function evaluations is equal to $\mu$ plus the number of generations needed to achieve the set goal. The additional term of $\mu$ is only relevant for unreasonably large population sizes and is being tacitly ignored when it is absorbed in a runtime bound (such as $\Bigo{\mu n \log n}$) anyway.

We say that a function $f$ is exponential if $f = 2^{\Bigom{n^\varepsilon}}$ for a positive constant~$\varepsilon > 0$. A function $f$ is exponentially small if and only if $1/f$ is exponential. An event $A$ occurs with overwhelming probability if $1-\Prob{A}$ is exponentially small.

\subsection{Drift Theorems}

Our analysis will make heavy use of a technique called \emph{drift analysis}. In a nutshell, the progress of the algorithm is measured by a potential function such as the Hamming distance to an optimum where a potential of 0 indicates that an optimum has been found. The \emph{drift} is then defined as the expected change of this potential in one generation. 

The following \emph{multiplicative drift theorem} gives an upper bound on the expected time until the potential reaches 0 and an optimum has been found. It requires that the drift is at least proportional to its current state. It also gives a tail bound showing that the probability of exceeding this time is very small.

\begin{theorem}[Multiplicative drift theorem with tail bounds, adapted from \citealp{DoerrGoldberg2013}]
\label{the:multi_drift}
Let $\{X_t\}_{t\geq 0}$ be a sequence of random variables taking values in some set~$S$. Let $g:S\to\{0\}\cup\Real_{\geq 1}$ and assume that $g_{\max}:=\max\{g(x) \ | \ \allowbreak x\in S\}$ exists. Let $T:=\min\{t\geq~0:g(X_t)=0\}$. If there exists $\delta > 0$ such that
\[
\E{g(X_{t+1})\ | \ g(X_t)}\leq (1-\delta)g(X_t)
\]
then $\E{T}\leq (1+\ln g_{\max})/\delta$ and for every $c>0$, $\Prob{T>(\ln g_{\max} + c)/\delta}\leq e^{-c}$.
\end{theorem}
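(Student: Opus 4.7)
The plan is to derive the tail bound first and then obtain the expectation bound by integrating this tail. The central observation is that iterating the one-step contraction produces a bound on $\E{g(X_t)}$ that decays geometrically, and Markov's inequality then converts this into a statement about the hitting time $T$.

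First I would show by induction on $t$ that $\E{g(X_t)}\leq (1-\delta)^t g_{\max}$. The base case follows from $g(X_0)\leq g_{\max}$. For the inductive step, apply the tower rule:
\[
\E{g(X_{t+1})} = \E{\E{g(X_{t+1})\mid g(X_t)}} \leq (1-\delta)\E{g(X_t)} \leq (1-\delta)^{t+1}g_{\max},
\]
using the hypothesis of the theorem. Then using $1-\delta\leq e^{-\delta}$ we get $\E{g(X_t)}\leq g_{\max}\, e^{-\delta t}$.

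Next, the crucial structural fact is that the codomain of $g$ is $\{0\}\cup \Real_{\geq 1}$, so $T>t$ is equivalent to $g(X_t)\geq 1$. By Markov's inequality,
\[
\Prob{T>t} = \Prob{g(X_t)\geq 1} \leq \E{g(X_t)} \leq g_{\max}\, e^{-\delta t}.
\]
Setting $t=(\ln g_{\max}+c)/\delta$ for $c>0$ collapses the right-hand side to $e^{-c}$, which gives the stated tail bound.

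Finally, to obtain $\E{T}\leq (1+\ln g_{\max})/\delta$, I would split the standard identity $\E{T}=\sum_{t\geq 0}\Prob{T>t}$ (or its continuous analogue $\int_0^\infty \Prob{T>t}\,\di t$) at the threshold $t^{*}:=(\ln g_{\max})/\delta$. On $[0,t^{*}]$ we bound $\Prob{T>t}$ trivially by $1$, contributing at most $t^{*}=(\ln g_{\max})/\delta$. On $(t^{*},\infty)$ we use the sharper bound $\Prob{T>t}\leq g_{\max} e^{-\delta t}$ and integrate, which contributes $g_{\max}\cdot e^{-\delta t^{*}}/\delta = 1/\delta$. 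Summing the two pieces yields $(1+\ln g_{\max})/\delta$.

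I do not anticipate a real obstacle: the only subtle ingredient is the use of the codomain assumption $g(S)\subseteq \{0\}\cup \Real_{\geq 1}$ to translate the moment bound on $g(X_t)$ into a hitting-time bound via Markov. The contraction-iteration step is routine once one notes that it requires no independence, only the tower property of conditional expectation.
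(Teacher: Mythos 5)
The paper does not actually prove Theorem~\ref{the:multi_drift}; it is imported (up to notation) from the drift-analysis literature, so there is no internal proof to compare against. Your reconstruction is the standard argument for this theorem and is sound in all essentials: iterating the one-step contraction via the tower property needs no independence and gives $\E{g(X_t)}\le (1-\delta)^t g_{\max}\le g_{\max}e^{-\delta t}$; the codomain assumption $g(S)\subseteq\{0\}\cup\Real_{\ge 1}$ gives the inclusion $\{T>t\}\subseteq\{g(X_t)\ge 1\}$ (only this direction is needed, and it holds without any absorption argument), so Markov's inequality yields the tail bound; and splitting $\E{T}=\sum_{t\ge 0}\Prob{T>t}$ at $t^*=(\ln g_{\max})/\delta$ gives the expectation bound.

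The one place that deserves more care is that $T$ is integer-valued while the thresholds $(\ln g_{\max}+c)/\delta$ and $t^*$ are real. Your Markov step establishes $\Prob{T>t}\le g_{\max}e^{-\delta t}$ only for integer $t$; evaluating it at a real threshold costs a factor of at most $e^{\delta}$ (since $\Prob{T>t}=\Prob{T>\lfloor t\rfloor}$ and $\lfloor t\rfloor> t-1$), and likewise the tail integral $\int_{t^*}^\infty\Prob{T>t}\,\di t$ comes out as $e^{\delta}/\delta$ rather than $1/\delta$. So as written you recover the stated bounds only up to this $e^{\delta}$ rounding factor --- immaterial in this paper, where the only application (Lemma~\ref{lem:time-mu-n-log-n}) uses $\delta=1/(e\mu n)$, and the same slack appears in many statements of this theorem (some versions place a ceiling around the time threshold precisely for this reason). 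The exact constant $(1+\ln g_{\max})/\delta$ is usually obtained instead by applying additive drift to the potential $1+\ln g$ together with Jensen's inequality. This is a cosmetic issue about constants, not a substantive gap.
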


\section{Runtime Guarantees for Restricted Tournament Selection}
\label{sec:rtswitrep}

We first provide runtime guarantees for restricted tournament selection, showing under which conditions and parameter settings RTS is efficient and inefficient, respectively.

\subsection{Large Window Sizes Are Effective}
\label{sec:large-w}


We start off by proving that the time bound of $\Bigo{\mu n \log n}$, used as measure of success in all previous runtime analyses of diversity mechanisms (cf.\ Table~\ref{tab:divmech}), applies to many variants of the \muea. 
It assumes that the \muea never decreases the best fitness on a considered branch of \twomax; we will show in the proof of Theorem~\ref{the:positive-result-for-rts} that this assumption is met with high probability in the context of RTS with large window sizes. We also use this time bound to explain why $\Bigo{\mu n \log n}$ constitutes a natural benchmark in the context of the \muea enhanced with diversity mechanisms and to 
choose a sensible stopping criterion for our experimental analysis.

\begin{lemma}
\label{lem:time-mu-n-log-n}
Consider one branch of \twomax and a \muea with a replacement selection where the best fitness of all individuals on this branch never decreases and an offspring improving the current best fitness of its branch is always accepted\footnote{Compared to Lemma~3.3 in~\cite{Covantes2018a}, the statement about offspring was added for clarification; it was implicitly assumed and used in~\cite{Covantes2018a}. It is necessary as a branch might take longer to reach the optimum if better offspring are rejected; we will show later on, in Section~\ref{sec:rts-slowdown}, that this is the case for small window sizes~$w$. In Theorem~3.1 of~\cite{Covantes2018a}, the assumption was implicitly proven to hold for restricted tournament selection with large window sizes ($w \ge 2.5\mu \ln n)$.}. If the \muea is initialised with at least one individual on the branch then the optimum of the branch is found within time $\mu + 2e\mu n \ln n$ with probability $1-1/n$ and in expectation.
\end{lemma}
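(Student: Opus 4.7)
The plan is to apply the multiplicative drift theorem (Theorem~\ref{the:multi_drift}) to the Hamming distance of the branch's best individual to the branch optimum. Fix a branch with optimum $z^*\in\{0^n,1^n\}$ and let $X_t$ denote the minimum of $\Hamm{x}{z^*}$ over all individuals $x$ currently on that branch. Since fitness on either branch equals $n-\Hamm{\cdot}{z^*}$, the hypothesis that the best fitness on the branch never decreases makes $X_t$ non-increasing, and $X_t=0$ is equivalent to having found the branch optimum.

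To lower-bound the drift, consider a state with $X_t=k\geq 1$. I would select as parent an individual $x$ attaining $\Hamm{x}{z^*}=k$; by definition of $X_t$ at least one such individual exists, so uniform parent selection picks one with probability at least $1/\mu$. If mutation then flips exactly one of the $k$ bits on which $x$ disagrees with $z^*$ and leaves all other bits unchanged (probability at least $(k/n)(1-1/n)^{n-1}\geq k/(en)$), the offspring $y$ satisfies $\Hamm{y}{z^*}=k-1$ and has strictly better fitness than the current branch best. The second lemma hypothesis, that improving offspring are always accepted, then guarantees $X_{t+1}\leq k-1$, yielding
\[
\E{X_{t+1}\mid X_t=k}\;\leq\;k-\frac{k}{e\mu n}\;=\;\left(1-\frac{1}{e\mu n}\right)k.
\]

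With $\delta=1/(e\mu n)$ and $g_{\max}\leq n$, Theorem~\ref{the:multi_drift} gives $\E{T}\leq e\mu n(1+\ln n)\leq 2e\mu n\ln n$ for $n\geq e$, and the tail bound at $c=\ln n$ yields $\Prob{T>2e\mu n\ln n}\leq 1/n$. Adding the $\mu$ function evaluations consumed by initialisation converts the bound on the number of generations into the claimed bound $\mu+2e\mu n\ln n$ on the number of function evaluations, both in expectation and with probability at least $1-1/n$. There is no real technical obstacle beyond verifying that both hypotheses are genuinely used: the non-decrease of the branch best ensures monotonicity of $X_t$, and the automatic acceptance of improvements ensures that the strictly better $y$ constructed above is not filtered out by the replacement rule — a failure mode that, as foreshadowed by the footnote, does occur for RTS with small $w$ and is treated in Section~\ref{sec:rts-slowdown}.
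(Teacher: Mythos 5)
Your proposal is correct and follows essentially the same route as the paper: multiplicative drift on the Hamming distance of the branch's best individual to the branch optimum, with the same $1/\mu \cdot (X_t/n) \cdot (1-1/n)^{n-1}$ improvement probability, the same $\delta = 1/(e\mu n)$, and the same tail-bound application yielding $2e\mu n \ln n$ with failure probability $1/n$. The only cosmetic difference is that you bound via $g_{\max}\leq n$ throughout, while the paper additionally exploits $X_0 < n/2$; both give the stated constants.
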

\begin{proof}
We apply the multiplicative drift theorem with tail bounds~\citep{DoerrGoldberg2010} (see Theorem~\ref{the:multi_drift}) to random variables $X_t$ that describe the Hamming distance of the closest individual to the targeted optimum. 
Note that $X_0 \le n/2$ as we start with an individual on the considered branch and the optimum has been found once {${X_t = 0}$}.

The probability of selecting an individual with Hamming distance $X_t$ is at least $1/\mu$. In order to create a better individual, it is sufficient that one of the $X_t$ differing bits is flipped and the other bits remain unchanged. Owing to our assumptions, such a better offspring will always survive. Each bit has a probability of being mutated of $1/n$ and the remaining bits remain unchanged with probability $(1-1/n)^{n-1} \ge 1/e$. Hence, the probability of creating an individual with a smaller Hamming distance is bounded as follows:
\[
\Prob{X_{t+1} < X_t \mid X_t} \ge \frac{1}{\mu}\cdot\frac{X_t}{n}\cdot\left(1-\frac{1}{n}\right)^{n-1}\geq\frac{X_t}{\mu en}.
\]
This implies
\[
\E{X_{t+1}\mid X_t} \le \left(1-\frac{1}{e\mu n}\right) X_t.
\]
Applying Theorem~\ref{the:multi_drift} with $\delta=\frac{1}{e\mu n}$, $g_{\max}=n$, and starting with an individual with Hamming distance $< n/2$ to the optimum, yields that the time till the optimum is found is at most $e\mu n \cdot (\ln(n/2) + \ln n) \le 2e\mu n \ln n$ with probability at most $1/n$ and in expectation. Adding a term of $\mu$ for the initial population completes the proof.
\end{proof}

Now we state the main result of this section, a positive result for RTS when the window size~$w$ is large. The following analysis shows that, if $w$ is chosen very large (even larger than the population size~$\mu$), the \muea with RTS behaves almost like the \muea with deterministic crowding.

\begin{theorem}
\label{the:positive-result-for-rts}
If $\mu = \Smallo{\sqrt{n}/\log n}$ and $w \ge 2.5\mu \ln n$ then the \muea with restricted tournament selection using genotypic or phenotypic distance finds both optima on \twomax in time $\Bigo{\mu n \log n}$ with probability at least ${1-2^{-\mu'+3}}$, where $\mu' := \min(\mu, \log n)$.
\end{theorem}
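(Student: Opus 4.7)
The plan is to mirror the analysis for deterministic crowding: show that with high probability the \muea with RTS and $w \ge 2.5\mu \ln n$ behaves as two essentially independent $(\mu_i+1)$-style EAs, one on each branch of \twomax, and then apply Lemma~\ref{lem:time-mu-n-log-n} per branch to obtain the runtime $\Bigo{\mu n \log n}$.

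First I would establish initial diversity. Each random initial individual has $\ones{x}>n/2$ (resp.\ $<n/2$) with probability $1/2 - \Bigo{1/\sqrt n}$, so the probability that all $\mu$ initial individuals lie on the same branch is at most $2\bigl(1/2 + \Bigo{1/\sqrt n}\bigr)^{\mu} \le 2^{-\mu+2}$; hence both branches carry at least one initial individual with probability at least $1-2^{-\mu+2}$. Next I would bound the tournament behaviour: while both branches are non-empty, the probability that any specific individual is absent from $w$ draws with replacement is at most $(1-1/\mu)^w \le e^{-w/\mu} \le n^{-2.5}$, using $w\ge 2.5\mu\ln n$. Per generation this yields (a)~that every non-empty branch is represented in the tournament and (b)~that the parent $x$ itself is in the tournament, each with probability at least $1-n^{-2.5}$. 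Union-bounding over $T=\Bigo{\mu n \log n}$ generations and using $\mu=\Smallo{\sqrt n/\log n}$ makes (a)--(b) hold throughout the run with probability $1-\Smallo{1/n}$.

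The crux is to conclude, conditional on (a)--(b), that every offspring $y$ is matched against a same-branch tournament member. The parent $x$ differs from $y$ by at most the number of bits flipped in the mutation, which is $\Bigo{\log n}$ with overwhelming probability, so both $|\ones{y}-\ones{x}|$ and $\Hamm{y}{x}$ are $\Bigo{\log n}$. By contrast, any opposite-branch tournament member $z_0$ has $\ones{z_0}$ on the other side of $n/2$ from $\ones{y}$, so $|\ones{y}-\ones{z_0}|\ge 1$ phenotypically; for genotypic distance one uses $\Hamm{y}{z_0} \ge |\ones{y}-\ones{z_0}|$ together with the fact that the two lineages have accumulated essentially independent bit patterns from different random initial individuals, giving $\Hamm{y}{z_0}=\Bigth{n}$ with overwhelming probability once both branches have climbed a little above the boundary. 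Combining these, whenever $y$ stays on its parent's branch and the parent's ones-count is not right at $n/2$, the parent is strictly closer to $y$ than any opposite-branch member, so the closest tournament member lies on the same branch as $y$. Under RTS, $y$ then competes only with same-branch individuals, so the best fitness on each branch never decreases and every branch-improving offspring is accepted, meeting the hypotheses of Lemma~\ref{lem:time-mu-n-log-n} for each branch.

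Applying Lemma~\ref{lem:time-mu-n-log-n} per branch then yields that each optimum is found within $\mu + 2e\mu n\ln n = \Bigo{\mu n \log n}$ function evaluations with probability at least $1-1/n$. Summing the failure probabilities of initialisation ($\le 2^{-\mu+2}$), tournament coverage ($\Smallo{1/n}$), within-branch matching ($\Smallo{1/n}$), and the two drift tails ($\le 2/n$) yields an overall failure probability at most $2^{-\mu+2} + \Bigo{1/n} \le 2^{-\mu'+3}$, where the clamp $\mu' = \min(\mu,\log n)$ merges the small-$\mu$ and large-$\mu$ regimes. I expect the main obstacle to lie in the within-branch matching step when individuals have ones-counts near the boundary $n/2$: there, mutation has non-negligible probability of pushing $y$ across the boundary, and opposite-branch members can phenotypically match the parent's proximity to $y$. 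Handling this likely requires tracking the ones-count distribution along each lineage and showing that, with overwhelming probability, lineages drift away from $n/2$ after a short warm-up, after which the clean parent-is-closest argument governs the rest of the run.
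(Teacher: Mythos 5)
Your overall architecture matches the paper's (initial diversity, parent-in-tournament with probability $1-n^{-2.5}$ via $w \ge 2.5\mu\ln n$, union bound over $\Bigo{\mu n \log n}$ generations, then Lemma~\ref{lem:time-mu-n-log-n} per branch), but there is a genuine gap at exactly the point you flag yourself: the within-branch matching argument near the boundary $n/2$. Your claim that an opposite-branch tournament member $z_0$ satisfies only $|\ones{y}-\ones{z_0}|\ge 1$ gives you nothing against a parent at distance $\Bigo{\log n}$, so ``the parent is strictly closer'' simply fails for individuals near $n/2$, and your proposed repair (tracking ones-count distributions and a ``warm-up'' drift away from $n/2$) is not carried out and would be substantially harder --- a lineage that starts at the boundary can be eliminated before it drifts away, so the repair is not obviously even true. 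You also invoke an unproven independence claim ($\Hamm{y}{z_0}=\Bigth{n}$) for the genotypic case, which is unnecessary.

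The paper closes this gap with two ideas you are missing. First, a \emph{safety-gap} initialisation lemma (Lemma~\ref{lem:restriselnich-good-init} with $\sigma=\log n$): by anticoncentration of $\Bin{n,1/2}$, with probability $1-2^{-\mu+1}(1+\Smallo{1})$ the initial population already contains a point with at most $n/2-\log n$ ones and one with at least $n/2+\log n$ ones, costing only a $(1+\Smallo{1})$ factor over your weaker ``both branches non-empty'' event (this is where the hypothesis $\mu=\Smallo{\sqrt{n}/\log n}$ is actually used, via $\sigma\mu=\Smallo{\sqrt n}$). Second, one only needs to protect the \emph{best} individual on each branch, not every individual: defining two points as ``close'' if their genotypic (hence also phenotypic) distance is at most $\log n$, and conditioning on no mutation ever flipping more than $\log n$ bits, every offspring is close to its parent; whenever the tournament contains a close point, the replaced individual is within distance $\log n$ of the offspring, while the best points on the two branches are separated by at least $2\log n$ (initially by the safety gap, and this separation only grows as the best fitness improves). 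Hence the branch-best can never be displaced by an opposite-branch offspring, which is exactly the hypothesis Lemma~\ref{lem:time-mu-n-log-n} needs --- no per-lineage distributional analysis and no warm-up phase is required. As written, your proof is incomplete without this (or an equivalent) mechanism.
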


Note that, for small population sizes $\mu \le \log n$, the probability ${1-2^{-\mu'+3}}$ is close to the success probability $1-2^{-\mu+1}$ for deterministic crowding (see Table~\ref{tab:divmech} and~\citealp{Friedrich2009}), apart from a constant factor in front of the $2^{-\mu'}$ term. For both, the success rate converges to~1 very quickly for increasing population sizes. For larger population sizes, $\mu > \log n$, the probability bound for restricted tournament selection is capped at $1-2^{-\log n + 3} = 1-8/n$ as there is always a small probability of an unexpected takeover occurring.

In order to prove Theorem~\ref{the:positive-result-for-rts}, we first analyse the probability of initialising a population such that there are individuals on each branch with a safety gap of~$\sigma$ to the border between branches. This safety gap will be used to exclude the possibility that the best individual on one branch creates offspring on the opposite branch.

\begin{lemma}
\label{lem:restriselnich-good-init}
Consider the population of the \muea on \twomax and for some $\mu$ and $\sigma$. The probability of having at least one initial search point with at most $n/2 - \sigma$ ones and one search point with at least $n/2 + \sigma$ ones is at least
\[
1-2\left(\frac{1+2\sigma\cdot\sqrt{2/n}}{2}\right)^\mu \ge 1-2^{-\mu+1}\left(1+\Smallo{1}\right)
\]
where the inequality holds if $\sigma \mu = \Smallo{\sqrt{n}}$.
\end{lemma}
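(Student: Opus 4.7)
}
Each initial individual $x$ is drawn uniformly at random from $\{0,1\}^n$, so $\ones{x}\sim \Bin(n,1/2)$, and by the symmetry $k\mapsto n-k$ of the binomial with parameter $1/2$ we have
\[
p \;:=\; \Prob{\ones{x}\le n/2-\sigma} \;=\; \Prob{\ones{x}\ge n/2+\sigma}.
\]
The event that the population contains both a search point with $\le n/2-\sigma$ ones and one with $\ge n/2+\sigma$ ones fails only if all $\mu$ independently drawn individuals miss one of the two sides. A union bound on the two symmetric failure events therefore gives
\[
\Prob{\text{good initialisation}} \;\ge\; 1 - 2(1-p)^{\mu}.
\]
So it suffices to show $1-p \le (1+2\sigma\sqrt{2/n})/2$, i.e.\ $p \ge 1/2-\sigma\sqrt{2/n}$.

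The main step is to upper bound the probability mass in the middle strip. By symmetry, $2p + \Prob{n/2-\sigma<\ones{x}<n/2+\sigma}=1$, so I need
\[
\Prob{n/2-\sigma<\ones{x}<n/2+\sigma} \;\le\; 2\sigma\sqrt{2/n}.
\]
The strip contains at most $2\sigma-1$ integer values, and for a $\Bin(n,1/2)$ variable the mode is at $n/2$ with probability $\binom{n}{\lfloor n/2\rfloor}2^{-n} \le \sqrt{2/(\pi n)}<\sqrt{2/n}$ (the standard Stirling estimate). Upper bounding each term by this mode probability yields the strip bound $2\sigma\sqrt{2/n}$, hence $p \ge 1/2-\sigma\sqrt{2/n}$ and the first claimed inequality follows. (A cosmetic remark about even vs.\ odd $n$ can be made: for odd $n$ the two central binomial coefficients are equal and the same bound applies to both.)

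For the second inequality, assume $\sigma\mu=\Smallo{\sqrt n}$. Then $2\sigma\sqrt{2/n}=\Smallo{1/\mu}$, so
\[
\left(\frac{1+2\sigma\sqrt{2/n}}{2}\right)^{\mu}
\;=\; 2^{-\mu}\bigl(1+2\sigma\sqrt{2/n}\bigr)^{\mu}
\;\le\; 2^{-\mu}\exp\!\bigl(2\sigma\mu\sqrt{2/n}\bigr)
\;=\; 2^{-\mu}(1+\Smallo{1}),
\]
using $1+x\le e^x$ and $e^{\Smallo{1}}=1+\Smallo{1}$. Multiplying by $2$ and subtracting from $1$ yields $1-2^{-\mu+1}(1+\Smallo{1})$ as required. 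I do not foresee a serious obstacle: the only place that needs a standard tool is the mode bound $\binom{n}{\lfloor n/2\rfloor}\le 2^n\sqrt{2/(\pi n)}$, and everything else is a union bound, symmetry, and the elementary inequality $1+x\le e^x$.
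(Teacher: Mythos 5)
Your proposal is correct and follows essentially the same route as the paper's proof: bound the mass of the middle strip by (number of points) times the central binomial probability $\binom{n}{\lfloor n/2\rfloor}2^{-n}\le\sqrt{2/n}$, use symmetry to get the tail probability, apply a union bound over the two one-sided failure events, and finish with $1+x\le e^x$ under $\sigma\mu=\Smallo{\sqrt{n}}$. The only cosmetic difference is your use of the slightly sharper Stirling constant $\sqrt{2/(\pi n)}$, which changes nothing.
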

\begin{proof}
By \citet[Equation 1.4.17]{Doerr2020}, the following inequalities hold for any binomial coefficient:
\[
\binom{n}{k} \le \binom{n}{\lfloor n/2 \rfloor} \le 2^n \cdot \sqrt{2/n}.
\]
Hence, for a random variable with binomial distribution $\Bin{n,1/2}$, for all $z \in [0,n]$ we have
\begin{align*}
\Prob{X=z}\leq\Prob{X=\lfloor n/2 \rfloor} \le\;& 2^{-n}\cdot\binom{n}{\lfloor n/2 \rfloor} 
\le \sqrt{2/n}.
\end{align*}

So the probability that an individual $x$ is initialised inside the safety gap is at most
\[
p_{\sigma}:=\Prob{n/2-\sigma < \ones{x} < n/2+\sigma}\leq 2\sigma\cdot\sqrt{2/n}.
\]
Now let us define the probability that an individual $x$ is initialised on the outer regions with $\ones{x}\leq n/2 - \sigma$ ones ($0^n$ branch) or $\ones{x}\leq n/2 + \sigma$ ones ($1^n$ branch) as $p_{0}$ and $p_{1}$, respectively. Note that both $p_{0}$ and $p_{1}$ are symmetric, and $p_{0} + p_{1}:=1-p_{\sigma}$, and by rewriting we obtain $p_{0}:=\frac{1-p_{\sigma}}{2}$ (the same for $p_{1}$) with its complement being $1-\frac{1-p_{\sigma}}{2}= \frac{1+p_{\sigma}}{2}$.

So the probability of having no individual with at most $n/2-\sigma$ ones is $(1-p_1)^\mu = \left(\frac{1+p_\sigma}{2}\right)^\mu$, and the same holds for having no individual with at least $n/2 + \sigma$ ones. Hence the probability of being initialised as stated in the statement of the lemma is at least
\[
1-2\left(\frac{1+p_{\sigma}}{2}\right)^\mu = 1 - 2^{-\mu+1} \cdot (1+p_\sigma)^\mu.
\]
Plugging in $p_\sigma$ and using the inequality $1+x\leq e^x$ as well as $\sigma\mu = \Smallo{\sqrt{n}}$ we simplify the last term as
\begin{align*}
 (1+p_\sigma)^\mu \le e^{2\sigma\mu \sqrt{2/n}} 
 =\frac{1}{e^{-\Smallo{1}}}\leq\;& \frac{1}{1-\Smallo{1}} 
 = 1+\Smallo{1},
\end{align*}
and by plugging all together we have $1-2^{-\mu+1}(1+\Smallo{1})$.
\end{proof}

Using Lemmas~\ref{lem:time-mu-n-log-n} and \ref{lem:restriselnich-good-init}, we can now prove Theorem~\ref{the:positive-result-for-rts}.
\begin{proof}[Proof of Theorem~\ref{the:positive-result-for-rts}]
We first apply Lemma~\ref{lem:restriselnich-good-init} with $\sigma := \log n$, noting that the assumption $\sigma \mu = \Smallo{\sqrt{n}}$ holds true since we assume $\mu = \Smallo{\sqrt{n}/\log n}$. According to Lemma~\ref{lem:restriselnich-good-init}, with probability $1-2^{-\mu+1}(1+\Smallo{1})$ the initial population contains at least one search point with at most $n/2-\log n$ ones and at least one search point with at least $n/2+\log n$ ones. We assume in the following that this has happened. Using $k! \ge (k/e)^k = k^{\Omega(k)}$, the probability of mutation flipping at least $\log n$ bits is at most $1/(\log n)! = (\log n)^{-\Bigom{\log n}} = n^{-\Bigom{\log \log n}}$. Taking the union bound over $\Bigo{\mu n \log n}$ steps still gives a superpolynomially small error probability. In the following, we work under the assumption that mutation never flips more than $\log n$ bits.

We call two search points \emph{close} if their genotypic distance is at most $\log n$. Owing to our assumption on mutations, every newly created offspring is close to its parent. Note that on \twomax the phenotypic distance of any two search points is bounded from above by the genotypic distance, hence close search points also have a phenotypic distance of at most $\log n$. Note that, whenever the tournament contains a search point that is close to the new offspring, either the offspring or a close search point will be removed. If this always happens, the best individual on any branch cannot be eliminated by an offspring on the opposite branch; recall that initially, the best search points on the two branches have phenotypic distance at least $2\log n$, and this phenotypic distance increases if the best fitness on any branch improves. When genotypic distances are being used, the genotypic distance is always at least $2\log n$.

Since each offspring has at least one close search point (its parent), the probability that the tournament does not contain any close search point is at most $(1-1/\mu)^w \le e^{-w/\mu}$. Using $w \ge 2.5\mu \ln n$, this is at most $e^{-2.5\ln n} = 1/n^{2.5}$. So long as the best individual on any branch does not get replaced by any individuals on the opposite branch, the conditions of Lemma~\ref{lem:time-mu-n-log-n} are met. In particular, the assumption made in Lemma~\ref{lem:time-mu-n-log-n}, that offspring improving the current best fitness of any branch are always accepted, holds true. Applying Lemma~\ref{lem:time-mu-n-log-n} to both branches, by the union bound the probability of both optima being found in time $2e\mu n \ln n$ is at least $1-2/n$. The probability that in this time a tournament occurs that does not involve a close search point is $\Bigo{\mu n \log n} \cdot 1/n^{2.5} = \Smallo{1/n}$ as $\mu = \Smallo{\sqrt{n}/\log n}$.

All failure probabilities sum up to (assuming $n$ large enough)
\[
\frac{2}{n} + \Smallo{\frac{1}{n}} + 2^{-\mu+1} (1+\Smallo{1}) + \frac{\Bigo{\mu n \log n}}{n^{-\Bigo{\log \log n}}} \le \frac{4}{n} + 2^{-\mu+2}\le 2^{-\mu'+3}
\]
where the last inequality follows as $2^{-\mu} \le 2^{-\mu'}$ and ${1/n \le 2^{-\mu'}}$.
\end{proof}

In Theorem~\ref{the:positive-result-for-rts} we chose $w$ so large that every tournament included the offspring's parent with high probability. Then the \muea behaves like the \muea with deterministic crowding~\citep{Friedrich2009}, leading to similar success probabilities (see Table~\ref{tab:divmech}). 

A success probability around $1-2^{-\mu+1}$ is best possible for many diversity mechanisms as with probability $2^{-\mu+1}$ the whole population is initialised on one branch only (for odd~$n$), and then it is likely that only one optimum is reached. Methods like fitness sharing and clearing obtain success probabilities of~1 by more aggressive methods that can force individuals to travel from one branch to the other by accepting worse search points along the way. The performance of restricted tournament selection (and that of deterministic crowding) is hence best possible amongst all mechanisms that do not allow worse search points to enter the population.

The condition $w \ge 2.5\mu \ln n$ is chosen to ensure that the conditions of Lemma~\ref{lem:time-mu-n-log-n} are met with high probability in one generation, and throughout $O(\mu n \log n)$ generations. This condition is quite crude as it also means that in any fixed generation, the tournament will contain all individuals from the population with high probability. We believe that the algorithm also maintains niches on the two different branches with a much smaller choice of~$w$ as it is sufficient to have any close individual from the offspring's branch in the population to make sure that the offspring only competes against individuals from the same branch. We will investigate this experimentally in Section~\ref{sec:exp_rts}.

\subsection{Small Window Sizes Can Fail}
\label{sec:small-w}

We now turn our attention to small $w$. If the $w$ is small in comparison to $\mu$, the possibility emerges that the tournament only contains individuals that are far from the offspring. In that case even the closest individual in the tournament will be dissimilar to the offspring, resulting in a competition between individuals from different ``niches'' (\ie sets of similar individuals). The following theorem and its proof show that this may result in one branch taking over the other branch, even when the branch to get extinct is very close to a global optimum. The resulting expected runtime is exponential.

\begin{theorem}
\label{the:badresrts}
The probability that the \muea with restricted tournament selection with ${w\ge 3}$ and either genotypic or phenotypic distances finds both optima on \twomax in time $n^{n-1}$ is at most $1 - \exp\left(-\frac{2.28\mu^{w-1}}{n-1}\right) + \Bigo{1/n}
= \Bigo{\mu^{w-1}/n}$. If $\mu \le n^{1/(w-1)}$ then the expected time for finding both optima is $\Bigom{n^n}$.

For $w=2$ the probability is at most $1 - \exp\left(-\frac{2\mu \harm{\mu}}{n}\right) + \Bigo{1/n} = \Bigo{(\mu \log \mu)/n}$, where $\harm{\mu}$ refers to the $\mu$-th harmonic number.
\end{theorem}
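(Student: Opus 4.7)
The plan is to show that, with probability $1 - \Bigo{\mu^{w-1}/n}$, one of the two branches of \twomax becomes devoid of individuals (takeover) during the run, and that from the post-takeover state the opposite optimum is reached within $n^{n-1}$ generations only with probability $\Bigo{1/n}$. Once takeover has occurred, the population quickly concentrates near the surviving optimum, leaving every individual within $\Bigo{1}$ Hamming distance of it; from such a concentrated state, creating an offspring on the opposite branch requires flipping $n - \Bigo{1}$ specific bits simultaneously, which has probability at most $\mu \cdot n^{-n + \Bigo{1}}$ per generation, summing to $\Bigo{\mu/n}$ over $n^{n-1}$ generations, absorbed into the additive $\Bigo{1/n}$ term.

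To establish that takeover occurs with high probability, let $X_t$ denote the number of individuals on the minority branch at generation~$t$. I would argue that $X_t$ decreases by $1$ in a generation with probability at least $p_t := c \cdot ((\mu - X_t)/\mu) \cdot (X_t/\mu)^w$ for a suitable constant $c > 0$, via the following case analysis. A decrease requires that the parent lies on the majority branch (probability $(\mu - X_t)/\mu$), all $w$ tournament samples come from the minority branch (probability $(X_t/\mu)^w$ under sampling with replacement), and the offspring's fitness weakly dominates the closest---hence necessarily minority---tournament member's. The second event forces the replaced individual to be from the minority branch, since any majority individual in the tournament would be at smaller phenotypic (and genotypic) distance to the offspring. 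Symmetry across branches is broken by the fitness condition once the majority branch has climbed sufficiently ahead of the minority branch, a high-probability event after a few improvement steps on each branch.

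The probability of avoiding takeover throughout $T$ generations then satisfies $\prod_{t=1}^{T} (1 - p_t) \le \exp(-\sum_t p_t)$. Bounding $\sum_t p_t$ from below by the expected occupation measure of the birth-death chain on $\{0, 1, \ldots, \mu\}$ with rates $b_k = (k/\mu)((\mu-k)/\mu)^w$ and $d_k = ((\mu-k)/\mu)(k/\mu)^w$ gives the exponent $2.28\mu^{w-1}/(n-1)$ for $w \ge 3$. For $w = 2$, the decrease rate $(X_t/\mu)^2((\mu-X_t)/\mu)$ is qualitatively different near $X_t = 1$, and a refined analysis of the expected time the walk spends at each state $k \in \{1, \ldots, \mu\}$ introduces a harmonic contribution $\sum_{k=1}^{\mu} 1/k = \harm{\mu}$, giving the exponent $2\mu\harm{\mu}/n$. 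The expected-time statement $\Bigom{n^n}$ then follows by combining the bounded success probability within $n^{n-1}$ steps with the $n^{\Omega(n)}$ expected time to escape the post-takeover concentrated state.

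The main obstacle will be pinning down the exact prefactors---the constant $2.28$ for $w \ge 3$ and the harmonic factor for $w = 2$---rather than mere order-of-magnitude bounds. This requires sharp control over the hitting-time distribution of the birth-death chain under the fitness-conditional asymmetry, together with a careful justification that the fitness condition is met on all but an $\Bigo{1/n}$ fraction of the relevant generations---including during the early low-fitness phase, where the symmetry between branches has not yet been broken and the unbiased walk has no inherent drift toward extinction.
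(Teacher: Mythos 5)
Your high-level skeleton (a race between branch extinction and discovery of the second optimum, bounded by a product that telescopes into an exponential) matches the paper's, but three of your steps have genuine gaps. First, you start the takeover analysis from an early, low-fitness state and rely on the majority branch having ``climbed sufficiently ahead'' so that the fitness condition resolves each tournament in its favour; this symmetry-breaking is exactly the hard part and you give no argument for it --- the two branches are symmetric and can remain fitness-balanced for a long time. The paper sidesteps this entirely by starting the race at the \emph{first} moment an optimum (say $0^n$) is found: from then on the offspring used in the good event is an exact copy of $0^n$ with maximal fitness, so it wins every comparison automatically, and the bound holds uniformly over all populations containing $0^n$ but not $1^n$ (including the most favourable one where every other individual is a Hamming neighbour of $1^n$). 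Second, your quantity $X_t$ (minority-branch size) is not monotone, so ``avoiding takeover'' is \emph{not} the event that no decrease ever occurs, and the inequality $\prod_t(1-p_t)\le\exp(-\sum_t p_t)$ does not bound the probability of never hitting $0$; you would genuinely need the birth--death hitting-time machinery you allude to. The paper instead tracks the number of exact copies of $0^n$, which can only increase until $1^n$ appears (any replacement of a copy of $0^n$ requires an offspring of fitness $n$), so the analysis reduces to a clean product $\prod_{i=1}^{\mu-1}\Prob{G_i\mid G_i\cup B_i}$ of per-level race probabilities with $\Prob{G_i}\ge\frac{i}{\mu}(1-\frac1n)^n(\frac{\mu-i}{\mu})^w$ and $\Prob{B_i}\le\frac{\mu-i}{\mu}\cdot\frac1n(1-\frac1n)^{n-1}+n^{-n}$.

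Third, your post-takeover argument is quantitatively insufficient: if the population is merely within Hamming distance $\Bigo{1}$ of $0^n$, the per-generation probability of creating $1^n$ is $n^{-n+\Bigo{1}}$, which over $n^{n-1}$ generations gives $n^{\Bigo{1}-1}$, not $\Bigo{1/n}$; and the concentration claim itself is unproven. Because the paper's takeover is by \emph{exact} copies of $0^n$, the final population is $\mu$ copies of $0^n$, the escape probability is exactly $n^{-n}$, and the union bound over $n^{n-1}$ generations gives precisely $1/n$. Finally, the constants arise not from occupation measures but from the elementary sum $\sum_{i=1}^{\mu-1}\frac{1}{i(\mu-i)^{w-1}}$: a partial-fraction identity gives $2\harm{\mu-1}/\mu$ for $w=2$ (your harmonic intuition is right, but the mechanism is this algebraic identity), and for $w\ge3$ the worst case $w=3$ evaluates to $\frac{2\harm{\mu-1}}{\mu^2}+\frac1\mu\sum_{i<\mu}i^{-2}\le 2.28/\mu$. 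You would need to replace your symmetry-breaking and post-takeover steps with arguments of this kind before the proof can be completed.
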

Theorem~\ref{the:badresrts} shows an improvement over the preliminary version of this paper~\citep{Covantes2018a}, which gave a weaker  probability bound of $\Bigo{\mu^w/n}$.

Note that the probability of finding both optima in $n^{n-1}$ generations is $\Smallo{1}$ if, for instance, $w = \Bigo{1}$ and $\mu$ grows slower than the polynomial $n^{1/(w-1)}$. It also holds if $w \le c(\ln n)/\ln \ln n$ for some constant $0 < c < 1$ and $\mu = \Bigo{\log n}$ as then $n^{1/(w-1)} = e^{(\ln n)/(w-1)} \ge e^{(\ln \ln n)/c} = (\ln n)^{1/c} = \Smallom{\log n}$, which shows $\mu^{w-1}/n = \Smallo{1}$.
The probability bound of $\Bigo{\mu^{w-1}/n}$ becomes trivial if $w \ge \log_{\mu}(n) + 1 = \log(n)/\log(w) + 1$ as then $\mu^{w-1}/n \ge 1$.

\begin{proof}[Proof of Theorem~\ref{the:badresrts}]
We assume $\mu \le n$ as otherwise all claimed probability bounds are larger than 1 for large enough~$n$.
The analysis follows the proof of Theorem~1 in~\cite{Friedrich2009}. We assume that the initial population contains at most one global optimum as the probability of both optima being found during initialisation is at most $\mu\cdot 2^{-n} \le n \cdot 2^{-n}$, which can be easily subsumed in the terms of $\Bigo{1/n}$ in the claimed probability bounds. 

We consider the first point of time at which the first optimum is being found. Without loss of generality, let us assume that this is $0^n$. Then we show that with high probability copies of $0^n$ take over the population before the other optimum~$1^n$ is found. The following arguments work for all populations that contain $0^n$ but not $1^n$; this includes the most promising population where all $\mu-1$ remaining individuals are Hamming neighbours of~$1^n$. 

Let~$i$ be the number of copies of the~$0^n$ individuals in the population, then a good event~$G_i$ (good in a sense of leading towards extinction as we are aiming at a negative result) is to increase this number from~$i$ to~$i+1$. For this it is just necessary to create copies of one of the~$i$ individuals. For~$n\geq 2$ we have~$\Prob{G_i}\geq\frac{i}{\mu}\cdot\left(1-\frac{1}{n}\right)^n\cdot\left(\frac{\mu-i}{\mu}\right)^w$ since it suffices to select one out of~$i$ individuals and to create a copy of the selected individual, and to select~$w$ times individuals from the remaining~$\mu-i$ individuals. On the other hand, a bad event~$B_i$ is to create an~$1^n$ individual in one generation. This probability is clearly bounded by 
\[
\Prob{B_i} \le \frac{\mu-i}{\mu} \cdot \frac{1}{n} \left(1-\frac{1}{n}\right)^{n-1} + n^{-n}
\]
as the probability to mutate $0^n$ into $1^n$ is $n^{-n}$ and for the remaining $\mu-i$ individuals the chance of creating $1^n$ is at most $1/n \cdot \left(1 - 1/n\right)^{n-1}$.
We deal with the term $n^{-n}$ separately: the probability of such a jump happening in $n^{n-1}$ steps is still at most $1/n$, which can be subsumed in the $\Bigo{1/n}$ term from the claimed failure probability. Hence we ignore this term in the following.
Note that the quotient of both probability bounds is
\begin{align*}
\frac{\Prob{B_i}}{\Prob{G_i}} \le \frac{\frac{\mu-i}{\mu} \cdot \frac{1}{n} \left(1-\frac{1}{n}\right)^{n-1}}{\frac{i}{\mu}\cdot\left(1-\frac{1}{n}\right)^n\cdot\left(\frac{\mu-i}{\mu}\right)^w}
= \frac{1}{n-1} \cdot \frac{\mu^w}{i(\mu-i)^{w-1}}.
\end{align*}
Together, the probability that the good event~$G_i$ happens before the bad event~$B_i$ is
\[
\Prob{G_i\mid G_i\cup B_i}= \frac{\Prob{G_i}}{\Prob{G_i \cup B_i}} \ge 1 - \frac{\Prob{B_i}}{\Prob{G_i} + \Prob{B_i}} \ge \exp(-\Prob{B_i}/\Prob{G_i})
\]
where the last step follows from the well-known inequality $1+\frac{x}{1-x} \ge e^x$ for all $x < 1$ applied to $x := -\Prob{B_i}/\Prob{G_i}$.

The probability that the copies of $0^n$ take over the population before $1^n$ is found is therefore at least
\begin{align}
& \prod_{i=1}^{\mu-1}\Prob{G_i\mid G_i\cup B_i}\geq
\prod_{i=1}^{\mu-1}\exp\left(-\frac{\Prob{B_i}}{\Prob{G_i}}\right)\notag\\
=\;& \exp\left(-\sum_{i=1}^{\mu-1}\frac{\Prob{B_i}}{\Prob{G_i}}\right)
\ge \exp\left(-\frac{\mu^{w}}{n-1}\sum_{i=1}^{\mu-1}\frac{1}{i\left(\mu-i\right)^{w-1}}\right).\label{eq:sum}
\end{align}
For $w=2$ the last sum simplifies to
\[
\sum_{i=1}^{\mu-1} \frac{1}{i(\mu-i)} 
= \sum_{i=1}^{\mu-1} \left(\frac{1}{i\mu} + \frac{1}{\mu(\mu-i)}\right) 
=\frac{2\harm{\mu-1}}{\mu}
\]
and the probability that takeover happens is at least
\[
\exp\left(-\frac{\mu^w}{n-1} \cdot \frac{2\harm{\mu-1}}{\mu}\right) \ge \exp\left(-\frac{2\mu \harm{\mu}}{n}\right)
\]
where the inequality holds since $w=2$ and $(\mu-1)/(n-1) \le \mu/n$, which in turn is implied by $\mu \le n$. Along with the error term of $\Bigo{1/n}$, this yields the claimed probability bound for $w=2$.

For $w\ge 3$ we note that the summands in~\eqref{eq:sum} are non-increasing with $w$. So the worst case is having the smallest possible value, $w = 3$. Note that
\[
\frac{1}{i(\mu-i)^2} 
= \frac{\mu-i}{i\mu(\mu-i)^2} + \frac{i}{i\mu(\mu-i)^2}
= \frac{1}{i\mu(\mu-i)} + \frac{1}{\mu(\mu-i)^2}
= \frac{1}{\mu} \left(\frac{1}{i(\mu-i)} + \frac{1}{(\mu-i)^2}\right).
\]
Thus
\begin{align*}
\sum_{i=1}^{\mu-1} \frac{1}{i(\mu-i)^2}
=\;& \frac{1}{\mu} \left(\sum_{i=1}^{\mu-1} \frac{1}{i(\mu-i)} + \sum_{i=1}^{\mu-1} \frac{1}{(\mu-i)^2}\right)\\
=\;& \frac{1}{\mu} \left(\frac{2\harm{\mu-1}}{\mu} + \sum_{i=1}^{\mu-1} \frac{1}{(\mu-i)^2}\right)\\
=\;& \frac{2\harm{\mu-1}}{\mu^2} + \frac{1}{\mu} \sum_{i=1}^{\mu-1} \frac{1}{i^2}.
\end{align*}
Since $\sum_{i=1}^{\mu-1} 1/i^2 \le \sum_{i=1}^\infty 1/i^2 = \pi^2/6 \approx 1.645$, the above approaches $1.645/\mu$ as $\mu$ grows. 
The function is $2/\mu$ for $\mu=2$, $2.25/\mu$ for $\mu=3$, and at most $2.28/\mu$ for $\mu \ge 4$, with the constant factor decreasing with increasing~$\mu$ for $\mu \in [4, \infty)$. Hence the function is bounded by $2.28/\mu$ for all~$\mu \in \mathbb{N}$.

Together we have
\[
\prod_{i=1}^{\mu}\Prob{G_i\mid G_i\cup B_i}\geq\exp\left(-\frac{\mu^{w}}{n-1} \cdot \frac{2.28}{\mu}\right)= \exp\left(-\frac{2.28\mu^{w-1}}{n-1}\right)\ge 1-\Bigo{\mu^{w-1}/n}.
\]

Once the population consists only of copies of $0^n$, a mutation has to flip all $n$ bits to find the $1^n$ optimum. This event has probability $n^{-n}$ and, by the union bound, the probability of this happening in a phase consisting of $n^{n-1}$ generations is at most $\frac{1}{n} = \Bigo{\mu^{w-1}/n}$. The sum of all failure probabilities is $\Bigo{\mu^{w-1}/n}$, which proves the first claim. For the second claim, observe that the conditional expected runtime is $n^n$ once the population has collapsed to copies of $0^n$ individuals. 
Using $\mu \le n^{1/(w-1)}$ this situation  
occurs with probability at least 
$\exp\left(-\frac{2.28n}{n-1}\right) - \Bigo{1/n} = \Bigom{1}$. Hence the unconditional expected runtime is $\Bigom{n^n}$.
\end{proof}

\section{Runtime Guarantees for RTS without Replacement}
\label{sec:rts_nor}

In this section we now discuss what happens if RTS is modified to select $w$ individuals \emph{without replacement}, that is, $w$ \emph{different} individuals are being selected. 
We call this a modification as we believe the original RTS selects with replacement (this is not mentioned explicitly, but it follows from the mathematical formulae in~\cite{Harik1995}). Selecting without replacement makes as much sense as selecting with replacement, and it is very plausible that many practical implementations have used one or the other variant. Selecting without replacement leads to a more diverse tournament and we expect a stronger effect, compared to selecting with replacement, when the same window size is used.
Note that if $w=\mu$ then the whole population is selected for the tournament and thus a closest individual in the population is selected to compete against the offspring. We define the algorithm for $w > \mu$ as well, for consistency with experiments for RTS with replacement (that use values for~$w$ larger than~$\mu$), even though there is no difference to $w=\mu$. If $w>\mu$ we select the whole population and $w-\mu$ copies of arbitrary individuals; the effect is the same as for $\mu=w$.

\subsection{Large Window Sizes Still Work}
\label{sec:large_w_rts_nor}
The positive result mentioned in Section~\ref{sec:large-w} still applies for this variant of RTS. 
The main difference to Theorem~\ref{the:positive-result-for-rts} is that, assuming the algorithm never flips at least $\log n$ bits, if $w \ge \mu$ the \muea always selects at least one close search point for the tournament (as opposed to a probability of at least $1-1/n^{2.5}$). We thus obtain the same result as  Theorem~\ref{the:positive-result-for-rts} for the weaker condition $w \ge\mu$. Note that this condition implies that the whole population is contained in the tournament, and so the offspring always competes against the closest individual from the population.

\begin{theorem}
\label{the:positive-result-for-rts-without-replacement}
If $\mu = \Smallo{\sqrt{n}/\log n}$ and $w \ge \mu$ the \muea with restricted tournament selection selecting $w$ individuals without replacement, using genotypic or phenotypic distance, finds both optima on \twomax in time $\Bigo{\mu n \log n}$ with probability at least ${1-2^{-\mu'+3}}$, where $\mu' := \min(\mu, \log n)$.
\end{theorem}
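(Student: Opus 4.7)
The plan is to mirror the proof of Theorem~\ref{the:positive-result-for-rts}, but exploit the fact that without replacement with $w \geq \mu$ the entire population is contained in every tournament, which removes the main source of failure in the earlier argument and relaxes the condition on~$w$.

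First, I would apply Lemma~\ref{lem:restriselnich-good-init} with the safety gap $\sigma := \log n$. The assumption $\sigma\mu = o(\sqrt{n})$ is valid since $\mu = o(\sqrt{n}/\log n)$. So with probability at least $1 - 2^{-\mu+1}(1+o(1))$ the initial population contains a search point with at most $n/2 - \log n$ ones and one with at least $n/2 + \log n$ ones, giving an initial phenotypic gap of at least $2\log n$ between the two branches.

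Next, I would bound the probability that any mutation in the considered run flips at least $\log n$ bits by $1/(\log n)! = n^{-\Omega(\log\log n)}$, and take a union bound over the $O(\mu n \log n)$ generations we care about; this remains superpolynomially small. Conditioning on this event, every offspring is within Hamming distance $\log n$ (hence phenotypic distance $\log n$) of its parent, so I call two search points \emph{close} if their distance is at most $\log n$. The key observation is now trivial: since $w \geq \mu$, the tournament is the entire population (with possible duplicates when $w > \mu$, which do not matter), so the closest individual~$z$ in the tournament is at least as close to the offspring~$y$ as is $y$'s parent. In particular $z$ is close to $y$, hence on the same branch as $y$, so an offspring on one branch can never displace the current best individual on the opposite branch. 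This makes the $1/n^{2.5}$ failure term from the with-replacement proof disappear entirely, which is why the condition on $w$ weakens from $w \geq 2.5\mu\ln n$ to $w \geq \mu$.

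Under these conditions, the two branches evolve essentially independently: on each branch the best fitness never decreases, and any offspring that improves the current best fitness of its branch is accepted (it beats the closest population member, which must lie on the same branch). This satisfies the assumptions of Lemma~\ref{lem:time-mu-n-log-n}, which I then apply to each branch, giving a failure probability of at most $2/n$ for not finding both optima within $2e\mu n \ln n + \mu = O(\mu n \log n)$ function evaluations.

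Finally, I would collect all failure probabilities in the same way as in the proof of Theorem~\ref{the:positive-result-for-rts}: the initialisation failure ($2^{-\mu+1}(1+o(1))$), the large-mutation failure ($o(1/n)$), and the two single-branch failures ($2/n$ total), all bounded by
\[
\frac{2}{n} + o\!\left(\frac{1}{n}\right) + 2^{-\mu+1}(1+o(1)) \le \frac{4}{n} + 2^{-\mu+2} \le 2^{-\mu'+3}
\]
using $2^{-\mu} \le 2^{-\mu'}$ and $1/n \le 2^{-\mu'}$ as before. I do not anticipate a genuine obstacle; the only slightly delicate point is ensuring that the definition of RTS without replacement for $w > \mu$ coincides with the case $w = \mu$, which the authors have already addressed in the remarks preceding Section~\ref{sec:large_w_rts_nor}, so the argument goes through uniformly for all $w \geq \mu$.
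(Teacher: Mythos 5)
Your proposal is correct and matches the paper's own (very brief) argument: the paper simply observes that with $w \ge \mu$ the tournament always contains a close search point (the offspring's parent), so the $1/n^{2.5}$ failure term from the proof of Theorem~\ref{the:positive-result-for-rts} vanishes and the rest of that proof carries over verbatim, including the initialisation lemma, the bound on large mutations, the application of Lemma~\ref{lem:time-mu-n-log-n} to each branch, and the summation of failure probabilities. The only slight imprecision is your claim that a close tournament member is ``on the same branch'' as the offspring --- what actually matters (and what the proof of Theorem~\ref{the:positive-result-for-rts} establishes) is that the best individual on the opposite branch lies at distance more than $\log n$ from any offspring that could beat it, so it can never be the closest tournament member --- but this does not affect the validity of the argument.
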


\subsection{On Takeover with Small Window Sizes}
\label{sec:small_w_rts_nor}

Unlike our positive result that works for both selection policies, the negative result (Theorem~\ref{the:badresrts}) is no longer applicable for RTS without replacement. Recall that for the negative result we rely on an extreme case where the algorithm has found the first optimum $0^n$ and then this optimum takes over the whole population before the opposite optimum $1^n$ is found. 
Consider the situation where there is just one individual~$x$ left (a Hamming neighbour of $1^n$) before copies of $0^n$ take over and the new offspring~$y$ is another copy of $0^n$. Theorem~\ref{the:badresrts} then relied on individual~$x$ being selected~$w$ times for the tournament in order to complete the takeover as then $x$ is the closest individual to~$y$. 
When using RTS without replacement, this event is impossible for $w \ge 2$ as the tournament is guaranteed to contain a copy of $0^n$, which then competes against the new offspring $y$. 
Hence, in this case, the subpopulation on the branch towards $1^n$ will never become extinct and will eventually reach $1^n$.

In fact, the following lemma shows that if the \muea using RTS without replacement and phenotypic distances is able to maintain subpopulations on both branches until both subpopulations have evolved to a fitness larger than $2n/3$, extinction becomes impossible and it is certain that both optima will be found eventually. For genotypic distances a similar statement holds when all individuals have passed a (higher) fitness threshold of $3n/4$.
\begin{lemma}
\label{lem:rts-without-replacement-extinction-impossible}
Consider the \muea using restricted tournament selection without replacement and $w \ge 2$. If phenotypic distances are used, once a population is reached that has subpopulations on both branches
whose best fitness is larger than $2n/3$, no subpopulation will become extinct.
If genotypic distances are used, once a population is reached that has subpopulations on both branches
whose worst fitness is larger than $3n/4$, no subpopulation will become extinct.
\end{lemma}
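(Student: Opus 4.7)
The plan is to prove both parts via an invariant-preservation argument: once the stated fitness condition holds, every subsequent generation still satisfies it, and in particular both subpopulations remain non-empty. Any offspring $y$ that enters the population must beat some $z$ already in the population and hence satisfies $f(y)\ge f(z)$, so the fitness threshold itself is trivially preserved across replacements. The real work is to show that no branch's subpopulation can be emptied; by symmetry, it suffices to rule out that an offspring $y$ on the $1^n$ branch replaces the last surviving $z$ on the $0^n$ branch.

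For the phenotypic case, beating $z$ forces $\ones{y}\ge n-\ones{z}>2n/3$, while $\ones{z}<n/3$. I then run a short case split on any $1^n$ branch member $u$ in the tournament (with $\ones{u}=c\ge n/2$) to show $|\ones{y}-c|<\ones{y}-\ones{z}$: when $c\le\ones{y}$ this reduces to $c>\ones{z}$, which follows from $c\ge n/2>n/3>\ones{z}$, and when $c>\ones{y}$ it reduces to $c<2\ones{y}-\ones{z}$, which in turn follows from $2\ones{y}-\ones{z}>4n/3-n/3=n\ge c$. Since the tournament contains at least $2$ distinct members and $z$ is the only $0^n$ branch member by assumption, some $1^n$ branch $u$ appears in the tournament and is strictly closer to $y$ than $z$, contradicting the choice of $z$. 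A slight variation of the same inequality, comparing $z$ to another $0^n$ branch member of strictly larger $\ones{\cdot}$, shows additionally that a ``champion'' attaining the current best fitness on its branch can only be evicted when the entire tournament consists of champions, so the best-fitness value on each branch is also non-decreasing.

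For the genotypic case I replace the phenotypic estimate by $\Hamm{y}{z}\ge\ones{y}-\ones{z}$ and the pigeonhole bound $\Hamm{y}{u}\le 2n-\ones{y}-\ones{u}$ (since $y$ and $u$ share at least $\ones{y}+\ones{u}-n$ one-bits). Under the worst-fitness hypothesis $>3n/4$, every $1^n$ branch member satisfies $\ones{\cdot}>3n/4$ and every $0^n$ branch member satisfies $\ones{\cdot}<n/4$; together with $\ones{y}>3n/4$ (from the fitness-preservation step) these combine to give $\Hamm{y}{u}<n/2<\Hamm{y}{z}$, so any $1^n$ branch tournament member is strictly closer to $y$ than $z$, and the same extinction-avoidance argument goes through.

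The main obstacle I anticipate is the phenotypic case, because the ``best fitness'' hypothesis permits $1^n$ branch individuals of weight as low as $n/2$, and one must verify that even these middling individuals are strictly closer to $y$ than the champion $z$; the threshold $2n/3$ is exactly what makes both sub-cases $c\le\ones{y}$ and $c>\ones{y}$ go through over the full range $c\in[n/2,n]$. The stronger threshold $3n/4$ in the genotypic case is forced by the pigeonhole bound, whose estimate scales with $\ones{y}+\ones{u}$ rather than the difference, so $1^n$ branch weights must be bounded away from $n/2$.
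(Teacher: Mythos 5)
Your proof is correct and follows essentially the same route as the paper's: the tournament must contain a second (distinct) individual on the offspring's branch, and the fitness thresholds force that individual to be strictly closer to the offspring than the lone individual on the opposite branch (via the weight case split phenotypically, and via the triangle inequality through $1^n$ genotypically). Your explicit ``champion'' argument showing that the best-fitness-per-branch invariant persists over time is a welcome addition that the paper's proof leaves implicit when it asserts that the lone individual still has fitness above the threshold at the (possibly much later) moment extinction is attempted.
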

\begin{proof}
A subpopulation can only become extinct if it only contains a single individual. We hereinafter call this individual~$\lone$ and observe that by assumption $f(\lone) > 2n/3$. Without loss of generality, we assume that~$\lone$ is on the 1-branch, thus $\ones{\lone} > 2n/3$.
The following events are necessary for extinction: an offspring~$y$ of fitness $f(y) \ge f(\lone)$ is created on the opposite branch to~$\lone$ (i.\,e., $\ones{y} < n/3$), the single individual~$\lone$ is chosen for the tournament, and~$\lone$ is the closest search point to the offspring~$y$. Only then will~$\lone$ be removed from the population. 

We show that these conditions are impossible. The phenotypic distance between~$\lone$ and the offspring~$y$ is larger than $n/3$. Since $w \ge 2$, the tournament must also contain an individual~$x'$ from the 0-branch. If $f(x') < f(y)$ then $x'$ is phenotypically closer to~$y$ than~$\lone$. Otherwise, since $0 \le \ones{x'} \le \ones{y} < n/3$, the phenotypic distance between $x'$ and the offspring~$y$ is less than $n/3$, hence~$\lone$ cannot be the closest search point to the offspring~$y$. Consequently, $\lone$ cannot be replaced by an offspring on the opposite branch.

With genotypic distances the same arguments apply. Assume that $\ones{\lone} > 3n/4$ and $f(y) \ge f(\lone)$, i.\,e., $\ones{y} < n/4$. Then the genotypic distance between every individual $x'$ from the 0-branch and the offspring~$y$ is less than $n/2$ as by assumption $\ones{x'} < n/4$ and thus both $y$ and $x'$ have Hamming distance less than $n/4$ to $0^n$, whereas the genotypic distance between $\lone$ and the offspring~$y$ is larger than $n/2$.
\end{proof}

In scenarios where the fitness thresholds from Lemma~\ref{lem:rts-without-replacement-extinction-impossible} are not reached, takeover of one branch may still happen. Imagine a population where an offspring~$y$ is created on one branch and the tournament contains an individual~$z$ from the opposite branch with $f(z) < f(y)$. If the tournament only contains other search points whose distance to~$y$ is larger than the distance between $z$ and $y$, $z$ will be removed from the population. If such steps happen repeatedly, the subpopulation on $z$'s branch may become extinct. Note, however, that if the size of said subpopulation is less than~$w$, the tournament must contain individuals from $y$'s branch that have a large distance from~$y$.

Figure~\ref{fig:twomax_takeover} sketches a population described above. The population is divided into two species. On the $1$-branch there is only one individual (orange point in Figure~\ref{fig:twomax_takeover} or individual $z$) and on the $0$-branch there are $\mu-1$ individuals and the offspring (red point in Figure~\ref{fig:twomax_takeover} or individual $y$), which has a better fitness than $z$. The parent individuals on the 0-branch contain a subset $P'$ of individuals whose distance to~$y$ is larger than the distance between $z$ and $y$. 
In this case, if the tournament consists of $z$ and $w-1$ individuals from $P'$ then $z$ is the closest individual to $y$ and $y$ replaces $z$.

\begin{figure*}[!ht]
	\centering
    \resizebox{.70\linewidth}{!}{
		\begin{tikzpicture}[domain=0:30,xscale=0.15,yscale=0.15,scale=1, every shadow/.style={shadow
        xshift=0.0mm, shadow yshift=0.4mm}]
          \tikzstyle{species1}=[draw=blue, fill=blue!30!white, line width=0.2ex];
          \tikzstyle{helpline}=[black,thick];
          \tikzstyle{function}=[blue,very thick];
          \tikzstyle{individual}=[green,very thick];
          \tikzstyle{loneind}=[orange,very thick];
          \tikzstyle{winp}=[brown,very thick];
          \tikzstyle{offspring}=[red,very thick];
          \draw[gray!20,line width=0.2pt,xstep=1,ystep=1] (0,0) grid (30.5,15.5);
          \draw[function] (0,15) -- (15,0) -- (30,15);
          \draw[helpline, thin, -triangle 45] (0,0) -- (0,17);
          \draw[helpline, thin, -triangle 45] (0,0) -- (32,0) node[right] {\scriptsize \#{}ones};
          \draw[helpline] (0,0) -- (0,16);
          \draw[helpline] (0,0) -- (31,0);
          \draw[helpline] (0,0) -- (-1,0) node[left] {\scriptsize $n/2$};
          \draw[helpline] (0,0) -- (0,-1) node[below] {\scriptsize $0$};
          \draw[helpline] (0,15) -- (-1,15) node[left] {\scriptsize $n$};
          \draw[helpline] (15,0) -- (15,-1) node[below] {\scriptsize $n/2$};
          \draw[helpline] (30,0) -- (30,-1) node[below] {\scriptsize $n$};
          
          \draw[gray, densely dotted, thick, arrows={|[width=3pt]-|[width=3pt]}] (12, 2) -- (17, 2); 

          \node (z) at (18.5, 2) {\scriptsize $z$};
          \filldraw[loneind] (17,2) circle (6pt); 
          
          \node (y) at (10.5, 3) {\scriptsize $y$};
          \filldraw[offspring] (12,3) circle (6pt); 
          \filldraw[individual] (13,2) circle (6pt);
          \filldraw[individual] (14,1) circle (6pt);
          
          \node (p) at (6.5, 12.5) {\scriptsize $P'$};
          \draw[species1,rotate around={135:(3.5, 11.5)}] (3.5, 11.5) ellipse (3cm and 1cm);
          \draw[gray, densely dotted, thick, arrows={|[width=3pt]-|[width=3pt]}] (5, 10) -- (12, 10); 

          \filldraw[individual] (2,13) circle (6pt);
          \filldraw[individual] (3,12) circle (6pt);
          \filldraw[individual] (4,11) circle (6pt);
          \filldraw[individual] (5,10) circle (6pt);
          
		\end{tikzpicture}
    }
	\caption{Sketch of $f=\twomax$ with a population where takeover may happen for the \muea with RTS selecting without replacement, using phenotypic distances. The offspring ($y$, red) replaces the single individual ($z$, orange) if the tournament contains $z$ and $w-1$ individuals from $P'$ as then $z$ is the one closest individual from the tournament as indicated by the dotted lines.}
	\label{fig:twomax_takeover}
\end{figure*}
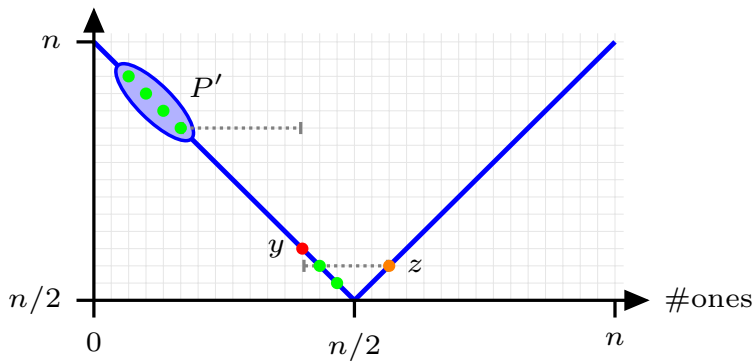


Note that the scenario exemplified in Figure~\ref{fig:twomax_takeover} is impossible under the conditions from Lemma~\ref{lem:rts-without-replacement-extinction-impossible} as there $z$ would have a much higher fitness and the niche $P'$ with the mentioned properties cannot exist. It is also highly unlikely for such a population to emerge in the \muea without any diversity mechanisms as the subpopulations on both branches tend to have similar fitness values. When using RTS, however, it is possible for different niches like $P'$, and niches with significantly worse fitness, to emerge.
We will further investigate the likelihood of takeover in the experimental analysis (Section~\ref{sec:exp_rts}).

\subsection{RTS with Small Window Sizes Slows Down Evolution}
\label{sec:rts-slowdown}

As mentioned in previous sections, RTS without replacement is proven to be effective for smaller values of $w$. But this comes at a price as the time to reach both optima can increase significantly. In a scenario where there is just one individual~$\lone$ on one branch, this ``lone'' individual will need to climb up its branch until it reaches its respective optimum. 
Assume we have a generation where~$\lone$ was selected as a parent and has produced a better offspring on the same branch. Further assume that~$\lone$ and its offspring have a worse fitness than all individuals on the opposite branch. Then the only way that the subpopulation of~$\lone$ can evolve is if $\lone$ is selected in the tournament. In this case, $\lone$ will compete against its (fitter) offspring and will be removed from the population. In other words, $\lone$ has to be removed to allow its offspring to survive. This means that~$\lone$ will need to be selected twice in one generation: as parent and in the replacement selection. Also note that the new subpopulation will consist of just one individual, hence the algorithm may be in the same situation for a long period of time. 


We make this precise in the following lemma which shows that the evolution on a branch with only a single individual proceeds as in a \emph{lazy}\footnote{In Markov chain theory, a Markov chain is called \emph{lazy} if there is a fixed probability (\eg $1/2$) of remaining in the same state~\citep{LPW09}.} (slowed-down) \eaoneone. In addition to assuming a branch with a single individual~$x$, we assume that all individuals on the opposite branch will have a fitness that is larger than the fitness of~$x$ by an amount that is at least logarithmic.

\begin{definition}
Define the $p$-lazy {\EA} as an algorithm that independently in each iteration idles with probability~$p$ and otherwise (that is, with probability $1-p$) performs one step of the \EA.
\end{definition}

The following lemma assumes without loss of generality that the lone individual resides on the 1-branch. A symmetric statement holds when the roles of the two branches are swapped.
\begin{lemma}
\label{lem:lazy-oneoneea}
Consider the \muea using restricted tournament selection without replacement, window size~$w \le \mu$ and genotypic or phenotypic distance. Suppose that the population $P_t$ at time~$t$ contains a single individual $x$ on the 1-branch with fitness at least $n/2 + \log n$ and that all other individuals in $P_t$ are on the 0-branch and have a fitness of at least $f(x) + \log n$.
Then, with probability $1-n^{-\Smallom{1}}$, the subpopulation on the 1-branch in the \muea will evolve as in one step of the $(w/\mu^2)$-lazy \EA with current search point~$x$ on \onemax.
\end{lemma}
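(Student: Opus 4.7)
The plan is to couple one generation of the \muea with RTS with one step of the $(w/\mu^2)$-lazy \EA on \onemax starting at $x$, so that the coupling succeeds except on an event of probability $n^{-\Smallom{1}}$. Throughout, I would condition on the benign event that the current mutation flips at most $\log n$ bits; its complement has probability at most $1/(\log n)! = n^{-\Smallom{1}}$ and can be absorbed in the final failure bound.

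The first observation is that in the \muea parent selection and tournament selection are independent, so the conjunction ``$x$ is chosen as parent and $x$ lies in the tournament'' has probability $\tfrac{1}{\mu}\cdot\tfrac{w}{\mu}=w/\mu^{2}$, which matches the step probability of the lazy \EA. On this \emph{active event} I would argue that the transition of the 1-branch subpopulation exactly mirrors one \EA step on \onemax. Since $\ones{x}\ge n/2+\log n$ and at most $\log n$ bits flip, the offspring $y$ satisfies $\ones{y}\ge n/2$, so $y$ stays on the 1-branch and $f(y)=\ones{y}$. Moreover, $y$ has genotypic (and hence phenotypic) distance at most $\log n$ from $x$, whereas every 0-branch member~$z$ of the tournament satisfies $\ones{z}\le n/2-2\log n$ and is therefore at distance at least $2\log n$ from $y$. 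Hence $x$ is the unique closest tournament member to $y$ and is replaced by $y$ exactly when $\ones{y}\ge\ones{x}$, which is the \onemax acceptance rule.

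On the complement of the active event I would show that the 1-branch subpopulation stays equal to $\{x\}$ with probability $1-n^{-\Smallom{1}}$. Two sub-cases arise: (a) $x$ is parent but is not in the tournament; a 1-branch offspring $y$ would then need to beat some 0-branch member $z$ with $f(z)\ge f(x)+\log n$, requiring a fitness improvement of at least $\log n$ in a single mutation, an event of probability $n^{-\Smallom{1}}$. (b) $x$ is not the parent, the parent $p$ lies on the 0-branch, and the offspring $y$ lies within genotypic distance $\log n$ of $p$. Since $p$ and $x$ differ in at least $3\log n$ bits, whenever the tournament happens to contain both $p$ and $x$, the parent $p$ is strictly closer to $y$ than $x$, so $x$ is not selected as the closest.

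The main obstacle is the residual configuration in sub-case (b) where $p$ is absent from the $w-1$ non-$x$ tournament slots. There, for $x$ to be the closest, one needs every other 0-branch tournament participant~$z$ to satisfy $\ones{z}\le 2\ones{y}-\ones{x}$. I plan to rule this out (up to probability $n^{-\Smallom{1}}$) by combining the concentration of $\ones{y}$ within distance $\log n$ of $\ones{p}$ with the fitness separation $f(z)\ge f(x)+\log n$ and the uniformity of the tournament draw without replacement, so that at least one tournament member in the phenotypic vicinity of $y$ is almost surely drawn. Summing the failure probabilities of the benign-mutation assumption and of cases~(a) and~(b) then delivers the claimed $n^{-\Smallom{1}}$ coupling error.
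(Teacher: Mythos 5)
Your overall architecture is the same as the paper's: condition on the benign event that at most $\log n$ bits flip, identify the ``active'' event that $x$ is simultaneously parent and tournament member (probability $\frac{1}{\mu}\cdot\frac{w}{\mu}=w/\mu^2$ by independence of the two draws), couple that event with one \EA step on \onemax, and show the 1-branch is untouched otherwise. Your treatment of the active event and of sub-case~(a) matches the paper's argument, which likewise observes that an offspring of $x$ can only survive outside the active event by gaining at least $\log n$ fitness in a single mutation. The problem is sub-case~(b). The repair you propose for the residual configuration --- $p$ on the 0-branch is the parent, $x$ is in the tournament, $p$ is not, and you want ``at least one tournament member in the phenotypic vicinity of $y$'' to be ``almost surely drawn'' --- cannot be derived from the lemma's hypotheses. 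Those hypotheses only lower-bound the \emph{fitness} of the 0-branch individuals; they say nothing about where on the 0-branch they sit. Take $f(x)=n/2+\log n$, a single individual $p$ with $\ones{p}=n/2-2\log n$, and all remaining individuals at (or near) $0^n$: every hypothesis is satisfied, yet if $p$ is chosen as parent, $y$ is a copy of $p$, and the tournament contains $x$ but not $p$, then $x$ is the closest tournament member to $y$ and is deleted. This event has probability of order $w/\mu^2$, which is $n^{-\Bigo{1}}$ for polynomial $\mu$, nowhere near $n^{-\Smallom{1}}$; it is exactly the takeover configuration of Figure~\ref{fig:twomax_takeover}. More basically, even when individuals close to $y$ do exist, any fixed one of them enters a without-replacement tournament only with probability $w/\mu$, so ``almost surely drawn'' is unattainable.

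For what it is worth, the paper disposes of the 0-branch-parent case by a different (and much blunter) argument: it asserts that the only way the 1-branch subpopulation can change in that case is for the mutation to produce a \emph{1-branch} point of fitness at least $f(x)$, which costs at least $\log n$ simultaneous bit flips --- i.e., it does not treat deletion of $x$ by a 0-branch offspring as a route by which that subpopulation changes. Your instinct that this is the delicate point of the lemma is therefore sound, but the probabilistic argument you sketch cannot close it; to finish along your lines you would either need an additional hypothesis controlling the phenotypic distribution of the 0-branch subpopulation, or you must fall back on the paper's necessary-condition argument for this sub-case.
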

\begin{proof}
If any of the search points on the 0-branch is selected as parent, at least $\log n$ bits have to flip to create a search point on the 1-branch of fitness at least $f(x)$. This has probability $n^{-\Smallom{1}}$ and it is a necessary condition for the subpopulation on the 1\nobreakdash-branch to change (as a worse individual on the 1-branch will be removed regardless of the outcome of the tournament).

If $x$ is selected as parent, at least $\log n$ bits have to flip to create a search point that is on the 0-branch or at least as good as the worst individual on the 0-branch. This, again, has probability $n^{-\Smallom{1}}$. If this does not happen, the offspring $y$ can only survive if the tournament contains~$x$ and $f(y) \ge f(x)$. In this case, $y$ replaces~$x$. The probability for the tournament containing~$x$ is $w/\mu$ (there are $\binom{\mu}{w}$ ways of choosing $w$ individuals without replacement and $\binom{\mu-1}{w-1}$ ways of choosing $x$ and $w-1$ other individuals from the remaining $\mu-1$ individuals; together, this yields a probability of $\binom{\mu-1}{w-1}/\binom{\mu}{w} = w/\mu$.). Along with a probability of $1/\mu$ for selecting~$x$ as parent, the probability of $y$ replacing $x$ in case $f(y) \ge f(x)$ is $w/\mu^2$.
\end{proof}

Lemma~\ref{lem:lazy-oneoneea} (along with the well-known fact that the \EA requires $\Bigth{n \log n}$ time on \onemax) suggests that, under appropriate conditions, a lower bound of $\Bigom{(\mu^2/w) \cdot  n \log n}$ applies for the \muea with RTS with replacement on \twomax. However, to formally prove such a bound, we would need to show that the assumptions of the lemma have a good chance to be satisfied during an appropriate time period. 

In preliminary experiments we observed that, for $w=2$ and across a range of values for~$\mu$, a lone individual emerged in almost all runs. Furthermore, in those cases the individuals on the opposite branch evolved faster, and the time to find both optima was determined by the time the lone individual evolved its respective optimum. 
We do not have a formal proof that the larger subpopulation evolves faster, though. And it is not clear whether a lone individual typically develops for larger values of~$w$. We therefore resort to experiments in Section~\ref{sec:exp_rts} to investigate this matter further and to check for which values of~$w$ our conjectured lower bound of $\Bigom{(\mu^2/w) \cdot  n \log n}$ is supported.



\section{Experimental Analysis}
\label{sec:exp_rts}


We provide an experimental analysis as well in order to see how closely the theory matches the empirical performance for a reasonable problem size, and to investigate a wider range of parameters, where the theoretical results are not applicable. 
We are interested in the impact of the window size~$w$ and the selection policy (with or without replacement) on the success rate of the \muea with RTS.


Another interesting question is to compare RTS with and without replacement in its resilience to takeover. We argued in Section~\ref{sec:rts_nor} that the RTS without replacement is more resilient to takeover, compared to its variant with replacement, but at the expense of an increased runtime for finding both optima. We also argued that there are scenarios where takeover may happen when no replacement is used, so we would like to know for how long RTS is able to delay takeover and when it is more likely that takeover happens for both selection policies.

We consider exponentially increasing population sizes ${\mu\in\{2,4,8,\ldots,1024\}}$ for a problem size $n=100$ and for $100$ runs. Based on our theoretical analysis we define the following outcomes and stopping criterion for each run. \emph{Success}, the population contains both $0^n$ and $1^n$ in the population. And \emph{failure}, once the run has reached a maximum number of generations and the population does not contain both optima.
This maximum is initially set to $10\mu n\ln n$ as motivated by Lemma~\ref{lem:time-mu-n-log-n}, with a more generous leading constant that leaves plenty of time for most diversity mechanisms to find both optima in case no takeover happens. The same time bound was also used in~\cite{Covantes2018b}, an empirical comparison of a range of diversity mechanisms. In Section~\ref{sec:extra_time_budget} we will consider a larger time budget to reflect our conjecture that RTS without replacement requires more time by a factor of order $\mu^2/w$, for appropriate values of~$w$.

For both RTS variants, 
we tested window sizes $w\in\{1,2,4,8,\ldots,1024\}$, however we only plot results up to~$w=128$ as the results for large~$w$ were very similar. In the particular case where $w>\mu$ and the tournament is selected without replacement, the algorithm uses the whole population in the tournament.

\subsection{\boldmath Experiments for a Time Budget of $10\mu n\ln n$}
\label{sec:well-known-time-budget}

In Figures~\ref{fig:successrtsg} and \ref{fig:successrtsp} we show the results for the \muea with the original RTS, selecting~$w$ individuals with replacement. For small values of~$w$ and $\mu$ the algorithm is not able to maintain individuals on both branches of \twomax for a long period of time, as predicted by Theorem~\ref{the:badresrts}. It is only when the population size is set to $\mu=1024$ and $w=1$ that the algorithm is able to maintain individuals on both branches before the takeover happens. When setting, for example, $w\geq 8$ and $\mu\geq 32$, the algorithm was able to find both optima with both genotypic and phenotypic distances. It is possible to observe a trade-off between $w$ and~$\mu$: a larger~$w$ allows for a smaller population size~$\mu$ to be used. Such a trade-off was also indicated by the probability bound $\Bigo{\mu^{w-1}/n}$ from Theorem~\ref{the:badresrts}. 

Our experiments show that RTS works well for much smaller window sizes than those required in Theorem~\ref{the:positive-result-for-rts}. For instance, for $\mu=32$ Theorem~\ref{the:positive-result-for-rts} results a window size of $w \ge 369$, whereas Figure~\ref{fig:successrtspnr} shows that $w \ge 8$ seems sufficient to be effective.
Note that the method seems to behave fairly similarly with respect to both distance functions.

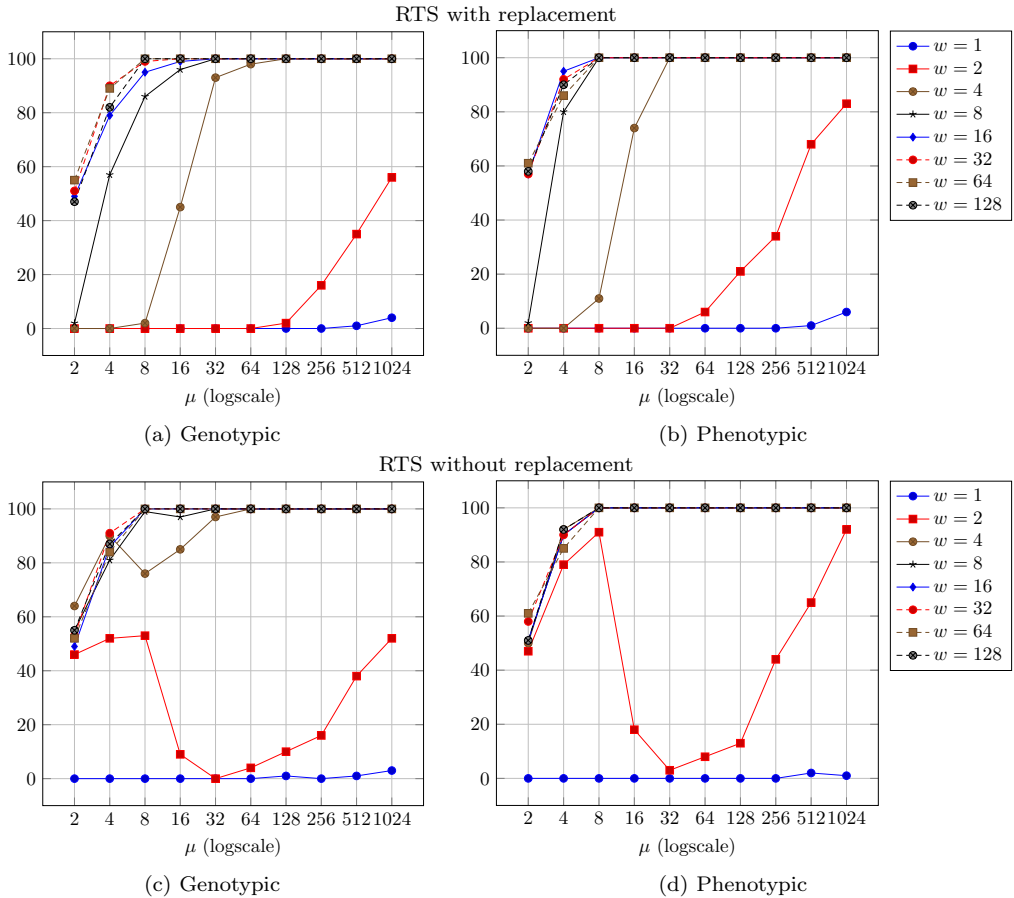
\begin{figure}[!ht]
    \centering
    \begin{tabular}{@{}c@{}c@{}}
        \multicolumn{2}{c}{\footnotesize RTS with replacement} \\
        \subfloat[Genotypic]{
            \resizebox{.382\linewidth}{!}{
                \begin{tikzpicture}	
        		    \begin{axis}[
                        width=7cm,
            		    height=6cm,
                        xlabel=$\mu$ (logscale),
            		    xtick={1,2,3,4,5,6,7,8,9,10},
                        xticklabels={2,4,8,16,32,64,128,256,512,1024},
                        ymin=0, ymax=100,
            		    every axis y label/.style={rotate=0, black, at={(-0.13,0.5)},},
    				]
    					\addplot table[x=mu, y=w1]{\successrtsg};
                    	\addplot table[x=mu, y=w2]{\successrtsg};
                        \addplot table[x=mu, y=w4]{\successrtsg};
                        \addplot table[x=mu, y=w8]{\successrtsg};
                        \addplot table[x=mu, y=w16]{\successrtsg};
                        \addplot table[x=mu, y=w32]{\successrtsg};
                        \addplot table[x=mu, y=w64]{\successrtsg};
                        \addplot table[x=mu, y=w128]{\successrtsg};
        		   \end{axis}
    		    \end{tikzpicture} 
		    }
		    \label{fig:successrtsg}
		} 
        & 
        \subfloat[Phenotypic]{
            \resizebox{.5\linewidth}{!}{
                \begin{tikzpicture}	
            	    \begin{axis}[
            	        width=7cm,
            	        height=6cm,
        		        xlabel=$\mu$ (logscale),
        		        xtick={1,2,3,4,5,6,7,8,9,10},
                        xticklabels={2,4,8,16,32,64,128,256,512,1024},
                        ymin=0, ymax=100,
        		        every axis y label/.style={rotate=0, black, at={(-0.13,0.5)},},
        		        legend columns=1,
                        legend style={at={(1.35,1)},anchor=north east},
                    	legend cell align=left,
                	]
        		        \addplot table[x=mu, y=w1]{\successrtsp};
                        \addplot table[x=mu, y=w2]{\successrtsp};
                        \addplot table[x=mu, y=w4]{\successrtsp};
                        \addplot table[x=mu, y=w8]{\successrtsp};
                        \addplot table[x=mu, y=w16]{\successrtsp};
                        \addplot table[x=mu, y=w32]{\successrtsp};
                        \addplot table[x=mu, y=w64]{\successrtsp};
                        \addplot table[x=mu, y=w128]{\successrtsp};
                        \legend{$w=1$, $w=2$, $w=4$, $w=8$, $w=16$, $w=32$, $w=64$, $w=128$, $w=256$, $w=512$, $w=1024$};
        	        \end{axis}
    	        \end{tikzpicture}
	        }
	        \label{fig:successrtsp}
        } \\
        \multicolumn{2}{c}{\footnotesize RTS without replacement} \\
        \subfloat[Genotypic]{
            \resizebox{.382\linewidth}{!}{
                \begin{tikzpicture}	
        		    \begin{axis}[
                        width=7cm,
                    	height=6cm,
                        xlabel=$\mu$ (logscale),
                    	xtick={1,2,3,4,5,6,7,8,9,10},
                        xticklabels={2,4,8,16,32,64,128,256,512,1024},
                    	every axis y label/.style={rotate=0, black, at={(-0.13,0.5)},},
    				]
            			\addplot table[x=mu, y=w1]{\successrtsgnr};
                        \addplot table[x=mu, y=w2]{\successrtsgnr};
                        \addplot table[x=mu, y=w4]{\successrtsgnr};
                        \addplot table[x=mu, y=w8]{\successrtsgnr};
                        \addplot table[x=mu, y=w16]{\successrtsgnr};
                        \addplot table[x=mu, y=w32]{\successrtsgnr};
                        \addplot table[x=mu, y=w64]{\successrtsgnr};
                        \addplot table[x=mu, y=w128]{\successrtsgnr};
        		    \end{axis}
    		    \end{tikzpicture}
		    }
		    \label{fig:successrtsgnr}
        } 
        &
        \subfloat[Phenotypic]{
            \resizebox{.5\linewidth}{!}{
                \begin{tikzpicture}	
            	    \begin{axis}[
            		    width=7cm,
            		    height=6cm,
            		    xlabel=$\mu$ (logscale),
            		    xtick={1,2,3,4,5,6,7,8,9,10},
                        xticklabels={2,4,8,16,32,64,128,256,512,1024},
                    	every axis y label/.style={rotate=0, black, at={(-0.13,0.5)},},
                        legend columns=1,
                        legend style={at={(1.35,1)},anchor=north east},
                        legend cell align=left,
                    ]
                    	\addplot table[x=mu, y=w1]{\successrtspnr};
                        \addplot table[x=mu, y=w2]{\successrtspnr};
                        \addplot table[x=mu, y=w4]{\successrtspnr};
                        \addplot table[x=mu, y=w8]{\successrtspnr};
                        \addplot table[x=mu, y=w16]{\successrtspnr};
                        \addplot table[x=mu, y=w32]{\successrtspnr};
                        \addplot table[x=mu, y=w64]{\successrtspnr};
                        \addplot table[x=mu, y=w128]{\successrtspnr};
                        \legend{$w=1$, $w=2$, $w=4$, $w=8$, $w=16$, $w=32$, $w=64$, $w=128$, $w=256$, $w=512$, $w=1024$};
            	    \end{axis}
    		    \end{tikzpicture}
		    }
		    \label{fig:successrtspnr}
        }
    \end{tabular}
    \caption{The number of successful runs measured among $100$ runs at the time both optima were found on \twomax or $t=10\mu n \ln n$ generations have been reached for $n=100$ with the \muea with restricted tournament selection with and without replacement for $\mu\in\{2,4,8,\ldots,1024\}$, $w\in\{1,2,4,8,\ldots,128\}$, genotypic and phenotypic distance.}
    \label{fig:successrtsoldrunt}
\end{figure}

Now, Figures~\ref{fig:successrtsgnr} and \ref{fig:successrtspnr} illustrate the performance of RTS without replacement. While all curves for RTS with replacement and $w \ge 8$ in Figure~\ref{fig:successrtsgnr} were monotonically increasing, the success rate for $w=2$ without replacement in Figure~\ref{fig:successrtspnr} is clearly not monotonic. For population sizes $\mu\in\{2,4,8\}$ the success rate does increase, but once the population size increases to $\mu=16$ or ${\mu=32}$, the success rate drops steeply. Finally, the success rate starts going up again when ${\mu\ge 64}$. 

From our theoretical considerations and observing individual runs, the reason for this drop in the success rate seems to be due to the scenarios described in Section~\ref{sec:rts-slowdown}. During the runs, if there is a subpopulation of individuals with better fitness than the individuals on the opposite branch of \twomax, the better subpopulation starts taking over until there is just one individual~$x$ in the worse subpopulation. With $w = 2$ and using selection without replacement, in order for an improving mutation~$y$ of~$x$ to be accepted, the tournament must contain~$x$, assuming all individuals in the better subpopulation have a higher fitness than~$y$. Hence, for the worse subpopulation of one individual to progress, the single individual must be selected as parent and also be selected for the tournament as deleting~$x$ is the only way for its offspring to survive. 
Compared to a setting with large window sizes (Theorems~\ref{the:positive-result-for-rts} and~\ref{the:positive-result-for-rts-without-replacement}), this indicates an additional factor of $\mu/w$ in the expected time for the worst subpopulation to reach its optimum (cf. Lemma~\ref{lem:lazy-oneoneea}).  
Hence the drop in success rates seems to be caused by the time budget not being large enough to allow a single individual on one branch to evolve into its respective optimum (we will investigate this further in the following by increasing the time budget). This behaviour only occurs for intermediate population sizes~$\mu$ as for small~$\mu$ the difference between factors of $\mu$ and $\mu^2/w$ is not significant enough (recall that we have chosen a generous leading constant of 10 in the time bound $10\mu n \ln n$). For large~$\mu$, the algorithm is efficient, albeit less efficient than for larger values of the window size~$w$.

\subsection{\boldmath An Increased Time Budget of $100\max(1, \mu/w)\mu n \ln n$}
\label{sec:extra_time_budget}

Since we hypothesise that RTS without replacement requires more time to achieve positive results (see the theoretical results from Section~\ref{sec:rts-slowdown}, particularly Lemma~\ref{lem:lazy-oneoneea}), we performed the same experiments but increasing the time bound by an additional factor of $\max(1, \mu/w)$, where the maximum is used to accommodate values of $w > \mu$ that show identical behaviour to $w=\mu$. We also increased the leading constant from 10 to 100 to ensure that success or failure does happen in the given time, resulting in an enhanced time budget of $100\max(1, \mu/w)\mu n \ln n$. Our experiments will confirm that this budget
was sufficient in all runs.

In the case of RTS with replacement with the new time bound (Figures~\ref{fig:successrtsgnt} and \ref{fig:successrtspnt}), there are no major changes compared to the previous results shown in Figures~\ref{fig:successrtsg} and \ref{fig:successrtsp}. This suggests that the original time budget is sufficient for the algorithm to arrive at the defined outcomes (success or failure). The major difference is shown with respect to RTS without replacement with the new time budget (Figures~\ref{fig:successrtsgnrnt} and \ref{fig:successrtspnrnt}) compared to the same algorithm with the original time budget (Figures~\ref{fig:successrtsgnr} and \ref{fig:successrtspnr}). As can be observed the performance shown in Figures~\ref{fig:successrtsgnrnt} and \ref{fig:successrtspnrnt} show that RTS without replacement has a better performance for small values $w\ge 2$ and $\mu\ge 16$ when given a larger time budget. 

\begin{figure}[!ht]
    \centering
    \begin{tabular}{@{}c@{}c@{}}
        \multicolumn{2}{c}{\footnotesize RTS with replacement} \\
        \subfloat[Genotypic]{
        \resizebox{.382\linewidth}{!}{
        \begin{tikzpicture}	
    		\begin{axis}[
                width=7cm,
        		height=6cm,
                xlabel=$\mu$ (logscale),
        		xtick={1,2,3,4,5,6,7,8,9,10},
                xticklabels={2,4,8,16,32,64,128,256,512,1024},
        		every axis y label/.style={rotate=0, black, at={(-0.13,0.5)},},
				]
				\addplot table[x=mu, y=w1]{\successrtsgwrnt};
                \addplot table[x=mu, y=w2]{\successrtsgwrnt};
                \addplot table[x=mu, y=w4]{\successrtsgwrnt};
                \addplot table[x=mu, y=w8]{\successrtsgwrnt};
                \addplot table[x=mu, y=w16]{\successrtsgwrnt};
                \addplot table[x=mu, y=w32]{\successrtsgwrnt};
                \addplot table[x=mu, y=w64]{\successrtsgwrnt};
                \addplot table[x=mu, y=w128]{\successrtsgwrnt};
    		\end{axis}
		\end{tikzpicture}
		}
		\label{fig:successrtsgnt}
    } 
    & 
    \subfloat[Phenotypic]{
    \resizebox{.5\linewidth}{!}{
        \begin{tikzpicture}	
        	\begin{axis}[
        		width=7cm,
        		height=6cm,
        		xlabel=$\mu$ (logscale),
        		xtick={1,2,3,4,5,6,7,8,9,10},
                xticklabels={2,4,8,16,32,64,128,256,512,1024},
        		every axis y label/.style={rotate=0, black, at={(-0.13,0.5)},},
                legend columns=1,
                legend style={at={(1.35,1)},anchor=north east},
                legend cell align=left,
        	]
        		\addplot table[x=mu, y=w1]{\successrtspwrnt};
                \addplot table[x=mu, y=w2]{\successrtspwrnt};
                \addplot table[x=mu, y=w4]{\successrtspwrnt};
                \addplot table[x=mu, y=w8]{\successrtspwrnt};
                \addplot table[x=mu, y=w16]{\successrtspwrnt};
                \addplot table[x=mu, y=w32]{\successrtspwrnt};
                \addplot table[x=mu, y=w64]{\successrtspwrnt};
                \addplot table[x=mu, y=w128]{\successrtspwrnt};

                \legend{$w=1$, $w=2$, $w=4$, $w=8$, $w=16$, $w=32$, $w=64$, $w=128$, $w=256$, $w=512$, $w=1024$};
        	\end{axis}
		\end{tikzpicture}  	
		}
        \label{fig:successrtspnt}
    }
    \\
    \multicolumn{2}{c}{\footnotesize RTS without replacement} \\
    \subfloat[Genotypic]{
        \resizebox{.382\linewidth}{!}{
        \begin{tikzpicture}	
    		\begin{axis}[
                width=7cm,
        		height=6cm,
                xlabel=$\mu$ (logscale),
        		xtick={1,2,3,4,5,6,7,8,9,10},
                xticklabels={2,4,8,16,32,64,128,256,512,1024},
        		every axis y label/.style={rotate=0, black, at={(-0.13,0.5)},},
				]
				\addplot table[x=mu, y=w1]{\successrtsgnrnt};
                \addplot table[x=mu, y=w2]{\successrtsgnrnt};
                \addplot table[x=mu, y=w4]{\successrtsgnrnt};
                \addplot table[x=mu, y=w8]{\successrtsgnrnt};
                \addplot table[x=mu, y=w16]{\successrtsgnrnt};
                \addplot table[x=mu, y=w32]{\successrtsgnrnt};
                \addplot table[x=mu, y=w64]{\successrtsgnrnt};
                \addplot table[x=mu, y=w128]{\successrtsgnrnt};
    		\end{axis}
		\end{tikzpicture}
		}
		\label{fig:successrtsgnrnt}
    } 
    &
    \subfloat[Phenotypic]{
    \resizebox{.5\linewidth}{!}{
        \begin{tikzpicture}	
        	\begin{axis}[
        		width=7cm,
        		height=6cm,
        		xlabel=$\mu$ (logscale),
        		xtick={1,2,3,4,5,6,7,8,9,10},
                xticklabels={2,4,8,16,32,64,128,256,512,1024},
        		every axis y label/.style={rotate=0, black, at={(-0.13,0.5)},},
                legend columns=1,
                legend style={at={(1.35,1)},anchor=north east},
                legend cell align=left,
        	]
        		\addplot table[x=mu, y=w1]{\successrtspnrnt};
                \addplot table[x=mu, y=w2]{\successrtspnrnt};
                \addplot table[x=mu, y=w4]{\successrtspnrnt};
                \addplot table[x=mu, y=w8]{\successrtspnrnt};
                \addplot table[x=mu, y=w16]{\successrtspnrnt};
                \addplot table[x=mu, y=w32]{\successrtspnrnt};
                \addplot table[x=mu, y=w64]{\successrtspnrnt};
                \addplot table[x=mu, y=w128]{\successrtspnrnt};

                \legend{$w=1$, $w=2$, $w=4$, $w=8$, $w=16$, $w=32$, $w=64$, $w=128$, $w=256$, $w=512$, $w=1024$};
        	\end{axis}
		\end{tikzpicture}  	
		}
        \label{fig:successrtspnrnt}
    }
    \end{tabular}
    \caption{The number of successful runs measured among $100$ runs at the time both optima were found on \twomax or $t=100\max(1, \mu/w)\mu n \ln n$ generations have been reached for $n=100$ with the \muea with restricted tournament selection with and without replacement and $\mu\in\{2,4,8,\ldots,1024\}$, $w\in\{1,2,4,8,\ldots,128\}$, genotypic and phenotypic distance.}
    \label{fig:successrtsnewrunt}
\end{figure}
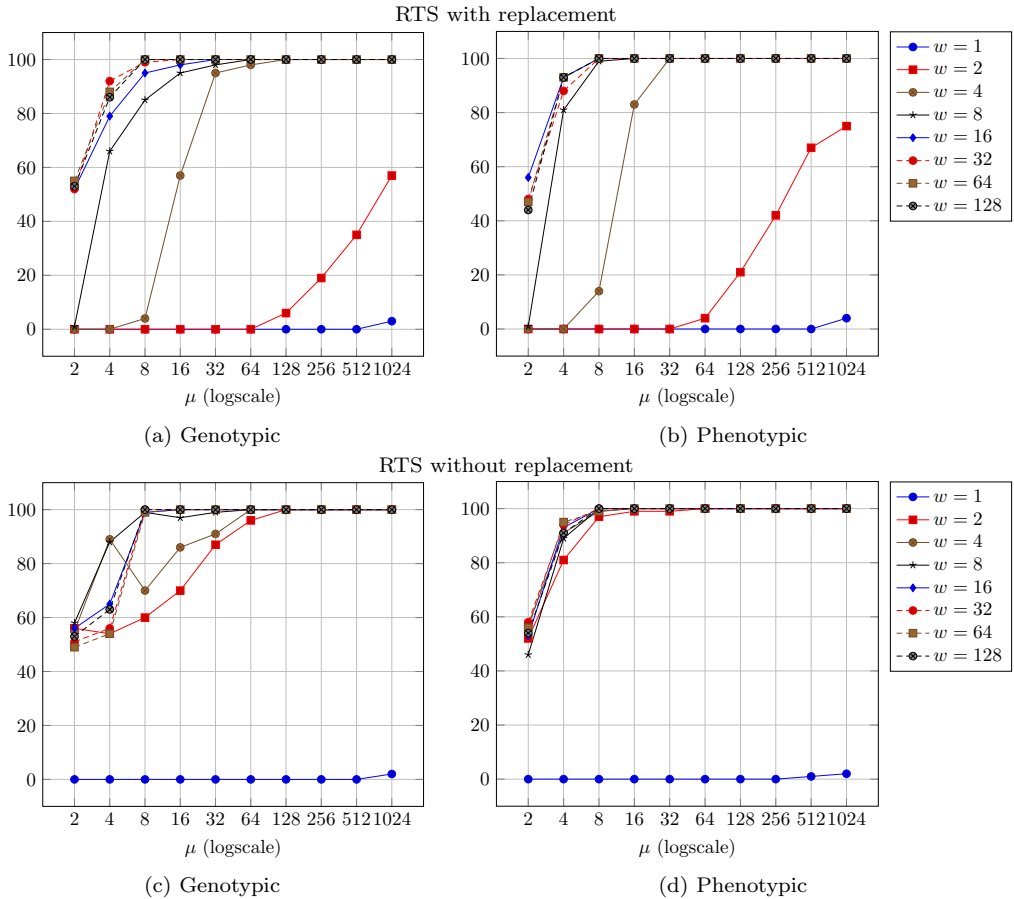

These results support our hypothesis that RTS without replacement has a better success rate than its variant with replacement for much smaller $w$ and $\mu$, but that this comes at the expense of a higher runtime as explained in Section~\ref{sec:rts-slowdown}. In this case takeover is not possible for $w=2$, but as can be seen from Figures~\ref{fig:successrtsgnrnt} and \ref{fig:successrtspnrnt}, evolving a single individual through the whole branch takes more time, which explains the sudden drop in the performance shown in  Figures~\ref{fig:successrtsgnr} and \ref{fig:successrtspnr}. This aligns with our hypothesis that the number of evaluations increases by a factor of $\mu^2/w$ to find both optima on \twomax. Finally, aside from the extreme case where $w=2$, the algorithm seems to behave similarly with both time budgets, achieving good results when $w\ge 8$ and $\mu\ge 8$ and achieving $100\%$ success rate when $w\ge 8$ and $\mu\ge 32$.
The choice $w=4$ gives mixed results as for both time budgets the success rate for RTS without replacement and genotypic distance is not monotonic in~$\mu$.

\subsubsection{Slow Down in the Performance of RTS Without Replacement on \twomax}
\label{sec:runtime_growth}

Now we look more closely into the slow down in the performance of RTS without replacement due to the appearance of the lone individual. We performed new experiments where we removed the time budget of generations, and we defined as the only stopping criterion either finding both optima, or that the population consists of copies of one optimum and with $w-1$ individuals with fitness $n-1$.

The reason for this slightly different stopping criterion is due to the selection without replacement of RTS. When the population has collapsed into one branch of \twomax, and the population has reached the optimum, once there are $w-1$ individuals in the population with fitness $n-1$, we claim that the algorithm is no longer able to replace the $w-1$ individuals with fitness $n-1$. Imagine that another copy of the present optimum is created. 
Since there are only $w-1$ non-optimal individuals,  any tournament of size $w$ must contain an optimum that is identical to the offspring. 
Then the new offspring will automatically compete with a global optimum since it is the closest individual in the tournament. In this sense, unless the opposite optimum is created by mutation, only replacements amongst optimal individuals are possible and the $w-1$ individuals with fitness $n-1$ will remain untouched. The algorithm will idle forever since it is not possible for all $\mu$ individuals to reach a fitness value of $n$. So once the algorithm reaches this ``stagnation'' scenario the run is stopped since there is no way to introduce new changes in the population.

In Figure~\ref{fig:runtime} we show how much time is needed by reporting the average number of generations achieved using the stopping criterion defined previously for the case of RTS without replacement with genotypic distance. An extra feature in this experimental setting is that we have introduced an additional curve for $w=\mu$ in order to observe whether the average runtime grows with $\mu^2/w$. Together with Figure~\ref{fig:runtime}, we provide in Table~\ref{tab:loneindapp} the mean and standard deviation of generations required for the same experimental setting as Figure~\ref{fig:runtime}, and the number of times the lone individual scenario came up among the 100 runs of the RTS without replacement.

\begin{figure}[!ht]
    \centering
    \resizebox{1\linewidth}{!}{
        \begin{tikzpicture}	
            \definecolor{oblue}{RGB}{0, 0, 255}
            \definecolor{fred}{RGB}{204, 0, 0}
            \definecolor{obrown}{RGB}{115, 77, 38}
            \definecolor{fbrown}{RGB}{153, 102, 51}
	        \begin{axis}[
                width=7cm,
        		height=6cm,
                xlabel=$\mu$ (logscale),
        		xtick={1,2,3,4,5,6,7,8,9,10},
                xticklabels={2,4,8,16,32,64,128,256,512,1024},
                ylabel=Average generations,
        		every axis y label/.style={rotate=90, black, at={(-0.15,0.5)},},
                legend columns=1,
                legend style={at={(2.52,1)},anchor=north east},
                legend cell align=left,
                name=ax1
		    ]
				\addplot table[x=mu, y=w2]{\rtsnrtwmu};
				\addplot table[x=mu, y=w4]{\rtsnrtwmu};
                \addplot table[x=mu, y=w8]{\rtsnrtwmu};
                \addplot table[x=mu, y=w16]{\rtsnrtwmu};
                \addplot table[x=mu, y=w32]{\rtsnrtwmu};
                \addplot table[x=mu, y=w64]{\rtsnrtwmu};
                \addplot table[x=mu, y=w128]{\rtsnrtwmu};
                \addplot table[x=mu, y=wmu]{\rtsnrtwmu};
                
                \coordinate (c1) at (axis cs:0.7,2000000);
                \coordinate (c2) at (axis cs:10.3,-2000000);
                \draw[dashed,line width = 1.5] (c1) rectangle (c2);
                
                \legend{$w=2$, $w=4$, $w=8$, $w=16$, $w=32$, $w=64$, $w=128$, $w=\mu$};
	        \end{axis}
	        \begin{axis}[
                width=7cm,
        		height=6cm,
                xlabel=$\mu$ (logscale),
        		xtick={1,2,3,4,5,6,7,8,9,10},
                xticklabels={2,4,8,16,32,64,128,256,512,1024},
                at={($(ax1.south east)+(1.2cm,0)$)},
                name=ax2
		    ]
				 \addplot+[mark=square*,mark options={fill=fred},red] table[x=mu, y=w4]{\rtsnrtwmu};
                 \addplot+[mark=otimes*,mark options={fill=fbrown},obrown] table[x=mu, y=w8]{\rtsnrtwmu};
                 \addplot+[mark=star,mark options={fill=black},black] table[x=mu, y=w16]{\rtsnrtwmu};
                 \addplot+[mark=diamond*,mark options={fill=blue},oblue] table[x=mu, y=w32]{\rtsnrtwmu};
                 \addplot+[mark=*,dashed,mark options={solid,fill=fred},red] table[x=mu, y=w64]{\rtsnrtwmu};
                 \addplot+[mark=square*,dashed,mark options={solid,fill=fbrown},obrown] table[x=mu, y=w128]{\rtsnrtwmu};
                \addplot+[mark=otimes*,dashed,mark options={solid,fill=gray},black] table[x=mu, y=wmu]{\rtsnrtwmu};
	        \end{axis}
	        \draw [dashed,line width = 1.5] (c1) -- (ax2.north west);
            \draw [dashed,line width = 1.5] (c2) -- (ax2.south west);
        \end{tikzpicture}
    }
    \caption{Average number of generations until either both optima of \twomax are found or the population consists of copies of the optimum and with $w-1$ individuals with fitness $n-1$ among 100 runs for $n=100$ with the \muea with restricted tournament selection without replacement for $\mu\in\{2,4,8,\ldots,1024\}$, $w\in\{2,4,8,\ldots,128,\mu\}$ and genotypic distance. The right-hand side shows a close-up view of the plot on the left-hand side (indicated by the dashed lines), without the curve for $w=2$.}
    \label{fig:runtime}
\end{figure}
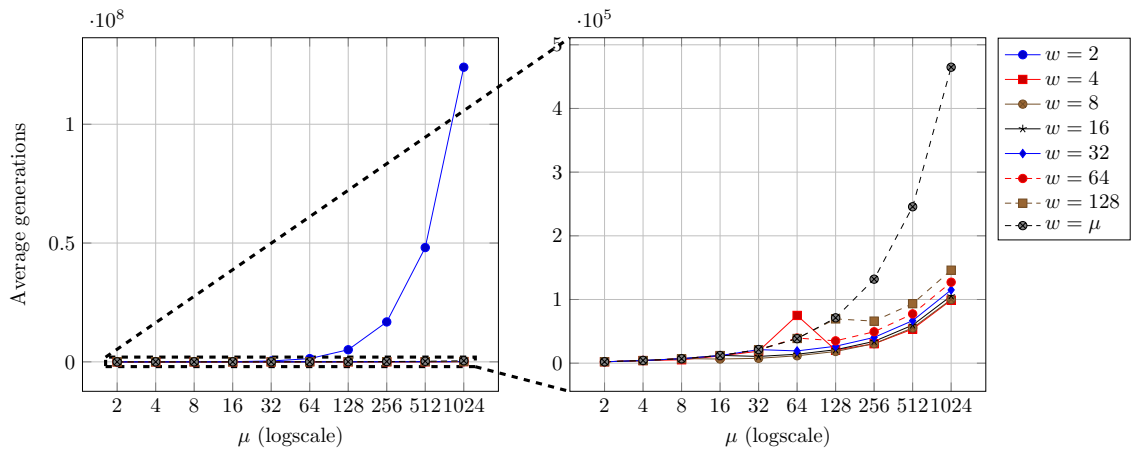

\begin{table}[!ht]
    \centering\footnotesize
    \caption{Mean and std of generations required to either find both optima of \twomax or the population consists of copies of the optimum and with $w-1$ individuals with fitness $n-1$, and the number of lone individual appearances among 100 runs for ${n=100}$ with the \muea with restricted tournament selection without replacement for $\mu\in\{2,4,8,\ldots,1024\}$, $w\in\{2,4,8,\ldots,128,\mu\}$ and genotypic distance.}
    \label{tab:loneindapp}
    \begin{tabular}{@{\quad}c@{\quad}cc@{\quad}c@{\quad}c@{\quad}c@{\quad}c@{\quad}c@{\quad}c@{\quad}c}
        \hline
        \multirow{2}{*}{\boldmath$\mu$} & & \multicolumn{8}{c}{\boldmath$w$} \\
        \cline{3-10}
         & & \bf 2 & \bf 4 & \bf 8 & \bf 16 & \bf 32 & \bf 64 & \bf 128 & \boldmath$\mu$ \\
        \hline
        \multirow{3}{*}{\boldmath 2} & mean & 2.22E+03 & 2.20E+03 & 2.07E+03 & 2.16E+03 & 2.22E+03 & 2.23E+03 & 2.20E+03 & 2.18E+03 \\
        & std & 6.31E+02 & 6.18E+02 & 5.71E+02 & 6.36E+02 & 6.45E+02 & 6.40E+02 & 5.36E+02 & 5.90E+02 \\
        & lone & 55 & 52 & 55 & 52 & 52 & 50 & 47 & 60 \\
        \midrule
        \multirow{3}{*}{\boldmath 4} & mean & 6.84E+03 & 4.07E+03 & 4.12E+03 & 4.15E+03 & 4.20E+03 & 4.21E+03 & 4.06E+03 & 4.28E+03 \\
        & std & 1.96E+03 & 9.05E+02 & 9.12E+02 & 1.12E+03 & 1.03E+03 & 8.09E+02 & 9.08E+02 & 9.81E+02 \\
        & lone & 88 & 64 & 68 & 60 & 57 & 71 & 65 & 66 \\
        \midrule\multirow{3}{*}{\boldmath 8} & mean & 2.35E+04 & 5.28E+03 & 7.15E+03 & 7.00E+03 & 7.08E+03 & 7.17E+03 & 6.95E+03 & 6.90E+03 \\
        & std & 6.23E+03 & 3.11E+03 & 1.39E+03 & 1.65E+03 & 1.49E+03 & 1.35E+03 & 1.32E+03 & 1.65E+03 \\
        & lone & 97 & 44 & 10 & 6 & 6 & 8 & 14 & 7 \\
        \midrule\multirow{3}{*}{\boldmath 16} & mean & 9.02E+04 & 1.13E+04 & 6.50E+03 & 1.22E+04 & 1.19E+04 & 1.20E+04 & 1.20E+04 & 1.19E+04 \\
        & std & 2.58E+04 & 1.09E+04 & 7.29E+03 & 1.57E+03 & 1.65E+03 & 1.61E+03 & 1.84E+03 & 1.69E+03 \\
        & lone & 100 & 21 & 4 & 0 & 1 & 0 & 0 & 0 \\
        \midrule\multirow{3}{*}{\boldmath 32} & mean & 3.67E+05 & 1.89E+04 & 7.62E+03 & 1.06E+04 & 2.13E+04 & 2.13E+04 & 2.08E+04 & 2.14E+04 \\
        & std & 1.10E+05 & 3.39E+04 & 1.12E+03 & 1.53E+03 & 2.39E+03 & 2.38E+03 & 2.27E+03 & 2.36E+03 \\
        & lone & 100 & 7 & 1 & 0 & 0 & 0 & 0 & 0 \\
        \midrule\multirow{3}{*}{\boldmath 64} & mean & 1.37E+06 & 7.50E+04 & 1.16E+04 & 1.41E+04 & 1.93E+04 & 3.95E+04 & 3.88E+04 & 3.84E+04 \\
        & std & 4.14E+05 & 5.98E+05 & 2.39E+03 & 1.71E+03 & 2.51E+03 & 4.03E+03 & 4.43E+03 & 3.97E+03 \\
        & lone & 100 & 1 & 0 & 0 & 0 & 0 & 0 & 0 \\
        \midrule\multirow{3}{*}{\boldmath 128} & mean & 5.10E+06 & 2.05E+04 & 1.83E+04 & 2.11E+04 & 2.66E+04 & 3.51E+04 & 6.95E+04 & 7.12E+04 \\
        & std & 1.95E+06 & 5.42E+03 & 1.44E+03 & 1.94E+03 & 2.44E+03 & 3.71E+03 & 6.70E+03 & 6.01E+03 \\
        & lone & 96 & 0 & 0 & 0 & 0 & 0 & 0 & 0 \\
        \midrule\multirow{3}{*}{\boldmath 256} & mean & 1.68E+07 & 3.05E+04 & 3.07E+04 & 3.42E+04 & 4.05E+04 & 4.93E+04 & 6.60E+04 & 1.32E+05 \\
        & std & 8.79E+06 & 4.99E+03 & 2.05E+03 & 2.41E+03 & 3.48E+03 & 3.59E+03 & 6.58E+03 & 1.01E+04 \\
        & lone & 88 & 0 & 0 & 0 & 0 & 0 & 0 & 0 \\
        \midrule\multirow{3}{*}{\boldmath 512} & mean & 4.81E+07 & 5.35E+04 & 5.54E+04 & 5.97E+04 & 6.68E+04 & 7.74E+04 & 9.34E+04 & 2.46E+05 \\
        & std & 3.89E+07 & 5.26E+03 & 3.15E+03 & 3.98E+03 & 4.82E+03 & 5.51E+03 & 7.22E+03 & 1.80E+04 \\
        & lone & 70 & 0 & 0 & 0 & 0 & 0 & 0 & 0 \\
        \midrule\multirow{3}{*}{\boldmath 1024} & mean & 1.24E+08 & 9.90E+04 & 1.00E+05 & 1.06E+05 & 1.15E+05 & 1.27E+05 & 1.46E+05 & 4.65E+05 \\
        & std & 1.42E+08 & 8.76E+03 & 5.56E+03 & 5.64E+03 & 6.86E+03 & 8.32E+03 & 1.00E+04 & 3.64E+04 \\
        & lone & 50 & 0 & 0 & 0 & 0 & 0 & 0 & 0 \\
        \bottomrule
    \end{tabular}
\end{table}

First of all, from the raw data we observed that the increased time budget of $t:=100\allowbreak\max(1, \mu/w)\mu n \ln n$ was never reached, hence it seems to be large enough, and effectively the RTS without replacement needs more time to find both optima on \twomax. From Figure~\ref{fig:runtime}, when $w=2$, we can observe that the average runtime immediately increases as soon as the population size increases. Note that in Table~\ref{tab:loneindapp} for $w=2$ lone individuals appear frequently. This supports our hypothesis from Section~\ref{sec:rts-slowdown} that due to the lone individual appearance during the run, there is an increase in the mean of generations needed to find their respective optimum. The mean increases when increasing the population size~$\mu$ and the standard deviation increases in line with the mean. 
In most runs, the algorithm is idling as the lone individual needs to be selected for both selection for reproduction and selection for the tournament.


In the case of $4\leq w \leq \mu$ we can observe an improvement in the performance of the algorithm. The chances of spending time idling for a significant replacement decrease since we are allowing more individuals to participate in the tournament. From both, Figure~\ref{fig:runtime} and Table~\ref{tab:loneindapp}, for $w=4$ we still can see some of this idling behaviour with fewer appearances of the lone individual as the population size increases. However, the means and standard deviations are generally smaller than for $w=2$. For the case of $8 \leq w \leq 128$ the algorithm seems to be much faster than predicted by our conjectured bound $\Bigom{(\mu^2/w) \cdot n \log n}$. A plausible explanation drawn from Table~\ref{tab:loneindapp} is that for $w \ge 8$ the chances of lone individuals emerging are much lower than for $w \in \{2, 4\}$.


Finally, for the case of $w=\mu$ there is a slow down towards the optimum because the whole population is part of the tournament, and only replacements can be done with individuals close to each other, which translates into small jumps and more time needed to move towards the optimum. In this case the effect of RTS vanishes and the size of the population is the only factor in the growth of the time, something that we were expecting from the term $\mu^2/w$ in our conjectured bound of $\Bigom{(\mu^2/w) \cdot n \log n}$. 

It seems that our conjectured bound of $\Bigom{(\mu^2/w) \cdot n \log n}$ only applies to very small values of $w$ due to the appearance of the lone individual and the necessity of specific selections in one step. For the case of large values of $w=\mu$, the RTS effect vanishes and we end up with an algorithm similar to deterministic crowding in which the only main factor on the runtime is the population size. For intermediate values of $w$, it seems that a more relaxed time bound can be applied since unwanted replacements are more difficult (more individuals participate in the tournament) and specific selections for replacement are not needed (large jumps in the same branch are possible) but this may need more detailed conditions or arguments related to the population dynamics and distribution on the population.

\subsection{Progress on the Inferior Niche}
\label{sec:exp_takeover}

The main goal of any niching mechanism is to avoid (or at least delay) that the fittest individuals take over the less fit individuals, which ultimately leads to a loss of diversity. From the previous theoretical and empirical analysis we can observe that RTS is able to delay takeover depending on the selection scheme, and the parameters $w$ and $\mu$. We know that RTS with replacement, for small values of~$w$ and $\mu$, is not able to maintain individuals on both branches of \twomax for a long period of time. For the case of RTS without replacement we know that takeover it is more difficult to happen, the algorithm is more resilient to takeover, but as we mentioned in Section~\ref{sec:small_w_rts_nor}, takeover is still possible.

So, when exactly is takeover more likely to happen? When exactly during the process does a subpopulation become extinct? When and under what situations does extinction happen? These are the kind of questions we aim to answer in this section. In order to observe when takeover is more likely to happen with both variants of RTS, we have designed the following experimental analysis. We recorded the fitness of the best individuals reached on both branches of \twomax, and then take the minimum fitness value of those two individuals. This yields the maximum fitness reached by the subpopulation that becomes extinct (if applicable), or the optimal fitness if both optima were reached. These results are shown in Figure~\ref{fig:takeover}. We define the specifics of the experimental setup in its caption. 

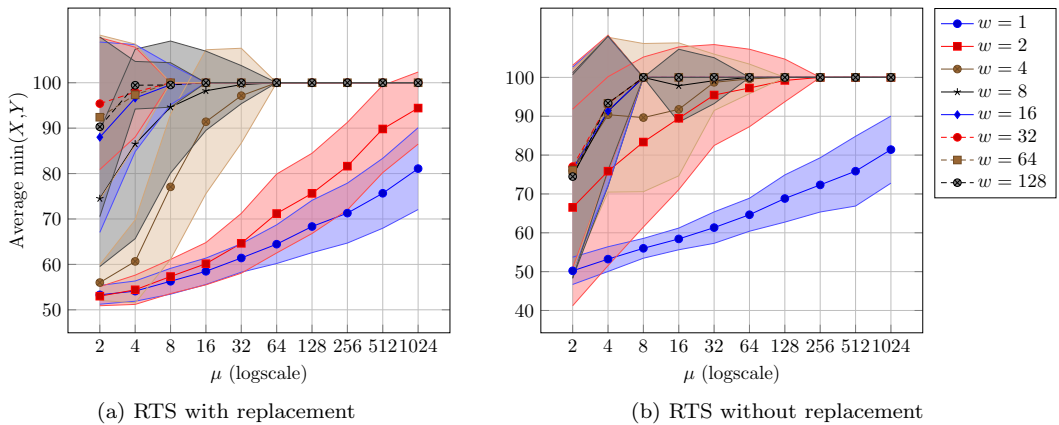
\begin{figure}[!ht]
    \centering
        \subfloat[RTS with replacement]{
            \resizebox{.404\linewidth}{!}{
		        \begin{tikzpicture}	
    		        \begin{axis}[
                        width=7cm,
        		        height=6cm,
                        xlabel=$\mu$ (logscale),
        		        xtick={1,2,3,4,5,6,7,8,9,10},
                        xticklabels={2,4,8,16,32,64,128,256,512,1024},
                        ylabel=Average $\min(X\text{,}Y)$,
                        ytick={40,50,60,70,80,90,100},
        		        every axis y label/.style={rotate=90, black, at={(-0.13,0.5)},},
				    ]
    				    \addplot table[x=mu, y=w1_m]{\wrgmeanstd};
                        \addplot table[x=mu, y=w2_m]{\wrgmeanstd};
                        \addplot table[x=mu, y=w4_m]{\wrgmeanstd};
                        \addplot table[x=mu, y=w8_m]{\wrgmeanstd};
                        \addplot table[x=mu, y=w16_m]{\wrgmeanstd};
                        \addplot table[x=mu, y=w32_m]{\wrgmeanstd};
                        \addplot table[x=mu, y=w64_m]{\wrgmeanstd};
                        \addplot table[x=mu, y=w128_m]{\wrgmeanstd};
                        
                        \addplot [name path=upperw1,color=blue!70] table[x=mu,y expr=\thisrow{w1_m}+\thisrow{w1_s}] {\wrgmeanstd};
                        \addplot [name path=lowerw1,color=blue!70] table[x=mu,y expr=\thisrow{w1_m}-\thisrow{w1_s}] {\wrgmeanstd};
                        \addplot [blue!50,fill opacity=0.5] fill between[of=upperw1 and lowerw1];
                        
                        \addplot [name path=upperw2,color=red!70] table[x=mu,y expr=\thisrow{w2_m}+\thisrow{w2_s}] {\wrgmeanstd};
                        \addplot [name path=lowerw2,color=red!70] table[x=mu,y expr=\thisrow{w2_m}-\thisrow{w2_s}] {\wrgmeanstd};
                        \addplot [red!50,fill opacity=0.5] fill between[of=upperw2 and lowerw2];
                        
                        \addplot [name path=upperw4,color=brown!70] table[x=mu,y expr=\thisrow{w4_m}+\thisrow{w4_s}] {\wrgmeanstd};
                        \addplot [name path=lowerw4,color=brown!70] table[x=mu,y expr=\thisrow{w4_m}-\thisrow{w4_s}] {\wrgmeanstd};
                        \addplot [brown!50,fill opacity=0.5] fill between[of=upperw4 and lowerw4];
                        
                        \addplot [name path=upperw8,color=black!70] table[x=mu,y expr=\thisrow{w8_m}+\thisrow{w8_s}] {\wrgmeanstd};
                        \addplot [name path=lowerw8,color=black!70] table[x=mu,y expr=\thisrow{w8_m}-\thisrow{w8_s}] {\wrgmeanstd};
                        \addplot [black!50,fill opacity=0.5] fill between[of=upperw8 and lowerw8];
                        
                        \addplot [name path=upperw16,color=blue!70] table[x=mu,y expr=\thisrow{w16_m}+\thisrow{w16_s}] {\wrgmeanstd};
                        \addplot [name path=lowerw16,color=blue!70] table[x=mu,y expr=\thisrow{w16_m}-\thisrow{w16_s}] {\wrgmeanstd};
                        \addplot [blue!50,fill opacity=0.5] fill between[of=upperw16 and lowerw16];
                        
                        \addplot [name path=upperw32,color=red!70] table[x=mu,y expr=\thisrow{w32_m}+\thisrow{w32_s}] {\wrgmeanstd};
                        \addplot [name path=lowerw32,color=red!70] table[x=mu,y expr=\thisrow{w32_m}-\thisrow{w32_s}] {\wrgmeanstd};
                        \addplot [red!50,fill opacity=0.5] fill between[of=upperw32 and lowerw32];
                        
                        \addplot [name path=upperw64,color=brown!70] table[x=mu,y expr=\thisrow{w64_m}+\thisrow{w64_s}] {\wrgmeanstd};
                        \addplot [name path=lowerw64,color=brown!70] table[x=mu,y expr=\thisrow{w64_m}-\thisrow{w64_s}] {\wrgmeanstd};
                        \addplot [brown!50,fill opacity=0.5] fill between[of=upperw64 and lowerw64];
                        
                        \addplot [name path=upperw128,color=black!70] table[x=mu,y expr=\thisrow{w128_m}+\thisrow{w128_s}] {\wrgmeanstd};
                        \addplot [name path=lowerw128,color=black!70] table[x=mu,y expr=\thisrow{w128_m}-\thisrow{w128_s}] {\wrgmeanstd};
                        \addplot [black!50,fill opacity=0.5] fill between[of=upperw128 and lowerw128];
    		        \end{axis}
		        \end{tikzpicture}
            }
            \label{fig:takeoverbestrts}
        }
        ~
        \subfloat[RTS without replacement]{
            \resizebox{.5\linewidth}{!}{
		        \begin{tikzpicture}	
    		        \begin{axis}[
                        width=7cm,
        		        height=6cm,
                        xlabel=$\mu$ (logscale),
                		xtick={1,2,3,4,5,6,7,8,9,10},
                        xticklabels={2,4,8,16,32,64,128,256,512,1024},
                        ytick={40,50,60,70,80,90,100},
                		every axis y label/.style={rotate=90, black, at={(-0.13,0.5)},},
                		legend columns=1,
                        legend style={at={(1.35,1)},anchor=north east},
                        legend cell align=left,
				    ]
        				\addplot table[x=mu, y=w1_m]{\wtrgmeanstd};
                        \addplot table[x=mu, y=w2_m]{\wtrgmeanstd};
                        \addplot table[x=mu, y=w4_m]{\wtrgmeanstd};
                        \addplot table[x=mu, y=w8_m]{\wtrgmeanstd};
                        \addplot table[x=mu, y=w16_m]{\wtrgmeanstd};
                        \addplot table[x=mu, y=w32_m]{\wtrgmeanstd};
                        \addplot table[x=mu, y=w64_m]{\wtrgmeanstd};
                        \addplot table[x=mu, y=w128_m]{\wtrgmeanstd};
                        
                        \addplot [name path=upperw1,color=blue!70] table[x=mu,y expr=\thisrow{w1_m}+\thisrow{w1_s}] {\wtrgmeanstd};
                        \addplot [name path=lowerw1,color=blue!70] table[x=mu,y expr=\thisrow{w1_m}-\thisrow{w1_s}] {\wtrgmeanstd};
                        \addplot [blue!50,fill opacity=0.5] fill between[of=upperw1 and lowerw1];
                        
                        \addplot [name path=upperw2,color=red!70] table[x=mu,y expr=\thisrow{w2_m}+\thisrow{w2_s}] {\wtrgmeanstd};
                        \addplot [name path=lowerw2,color=red!70] table[x=mu,y expr=\thisrow{w2_m}-\thisrow{w2_s}] {\wtrgmeanstd};
                        \addplot [red!50,fill opacity=0.5] fill between[of=upperw2 and lowerw2];
                        
                        \addplot [name path=upperw4,color=brown!70] table[x=mu,y expr=\thisrow{w4_m}+\thisrow{w4_s}] {\wtrgmeanstd};
                        \addplot [name path=lowerw4,color=brown!70] table[x=mu,y expr=\thisrow{w4_m}-\thisrow{w4_s}] {\wtrgmeanstd};
                        \addplot [brown!50,fill opacity=0.5] fill between[of=upperw4 and lowerw4];
                        
                        \addplot [name path=upperw8,color=black!70] table[x=mu,y expr=\thisrow{w8_m}+\thisrow{w8_s}] {\wtrgmeanstd};
                        \addplot [name path=lowerw8,color=black!70] table[x=mu,y expr=\thisrow{w8_m}-\thisrow{w8_s}] {\wtrgmeanstd};
                        \addplot [black!50,fill opacity=0.5] fill between[of=upperw8 and lowerw8];
                        
                        \addplot [name path=upperw16,color=blue!70] table[x=mu,y expr=\thisrow{w16_m}+\thisrow{w16_s}] {\wtrgmeanstd};
                        \addplot [name path=lowerw16,color=blue!70] table[x=mu,y expr=\thisrow{w16_m}-\thisrow{w16_s}] {\wtrgmeanstd};
                        \addplot [blue!50,fill opacity=0.5] fill between[of=upperw16 and lowerw16];
                        
                        \addplot [name path=upperw32,color=red!70] table[x=mu,y expr=\thisrow{w32_m}+\thisrow{w32_s}] {\wtrgmeanstd};
                        \addplot [name path=lowerw32,color=red!70] table[x=mu,y expr=\thisrow{w32_m}-\thisrow{w32_s}] {\wtrgmeanstd};
                        \addplot [red!50,fill opacity=0.5] fill between[of=upperw32 and lowerw32];
                        
                        \addplot [name path=upperw64,color=brown!70] table[x=mu,y expr=\thisrow{w64_m}+\thisrow{w64_s}] {\wtrgmeanstd};
                        \addplot [name path=lowerw64,color=brown!70] table[x=mu,y expr=\thisrow{w64_m}-\thisrow{w64_s}] {\wtrgmeanstd};
                        \addplot [brown!50,fill opacity=0.5] fill between[of=upperw64 and lowerw64];
                        
                        \addplot [name path=upperw128,color=black!70] table[x=mu,y expr=\thisrow{w128_m}+\thisrow{w128_s}] {\wtrgmeanstd};
                        \addplot [name path=lowerw128,color=black!70] table[x=mu,y expr=\thisrow{w128_m}-\thisrow{w128_s}] {\wtrgmeanstd};
                        \addplot [black!50,fill opacity=0.5] fill between[of=upperw128 and lowerw128];
                        \legend{$w=1$, $w=2$, $w=4$, $w=8$, $w=16$, $w=32$, $w=64$, $w=128$};
    		        \end{axis}
		        \end{tikzpicture}
            }
            \label{fig:takeoverbestrtsnr}
        }
    \caption{Average (pointed line) and Standard Deviation (shaded area) of minimum fitness reached between the best individual~$X$ found from the $0^n$ branch the best individual~$Y$ found from the $1^n$ branch ($\min(X,Y)$) measured among 100 runs at the time both optima were found on \twomax, the population has collapsed into one optimum on \twomax (the population consists of copies of just $0^n$ or $1^n$) with no time budget for $n=100$ with the \muea with restricted tournament selection with and without replacement for $\mu\in\{2,4,8,\ldots,1024\}$, $w\in\{1,2,4,8,\ldots,128\}$ and genotypic distance.}
    \label{fig:takeover}
\end{figure}

For $w=1$, both variants of RTS do not show major differences. The major difference in the performance starts when $w\ge 2$ for all populations sizes. RTS without replacement is able to avoid the extinction of one subpopulation for smaller population sizes than its variant with replacement, \ie while RTS without replacement is able to maintain individuals with fitness values up to~$65$ for small population sizes ($\mu=2$) and small window sizes (${w\ge 2}$), its variant with replacement requires greater window sizes ($w\ge 8$) for the same population size to achieve similar results. In general, both RTS variants have a better performance for large $\mu$ and for large $w$, which allows more individuals to participate in the tournaments. The main difference between both RTS variants is that RTS with replacement is not able to maintain populations on both branches of \twomax for a long time after initialisation for small $\mu$ and $w$. Takeover happens when the individuals have a fitness of around $50$, which indicates that takeover happens close to initialisation. This makes sense since individuals on different branches may look similar to each other and with small $w$, unwanted replacements may happen. 

RTS without replacement is more resistant to extinction after initialisation for small $\mu$ and $w$, but requires more fitness evaluations. This can be observed simply from the results on Figure~\ref{fig:takeover} but specifically if we compare them with $w\ge 2$ and $\mu\ge 2$, RTS without replacement is able to reach fitness values above $65$ compared to its variant with replacement.

\section{Conclusion}
\label{sec:con}

We theoretically and empirically examined the behaviour of restricted tournament selection, embedded into a simple \muea, on the bimodal function \twomax, where the goal is to find both optima. We rigorously proved that the performance of RTS varies a lot with the window size~$w$. If $w$ is large enough, $w \ge 2.5\mu \ln n$, then RTS behaves similarly to deterministic crowding. The probability of finding both optima in time $\Bigo{\mu n \log n}$ is close to $1-2^{-\mu+1}$, hence converging to~1 very quickly as $\mu$ grows. For small $\mu$ and $w$, if $\mu \le n^{1/(w-1)}$ (and $w \ge 3$) then RTS likely fails to find both optima of \twomax. This even holds when the \muea is allowed to start with the most promising population that does not yet contain both optima.

When selecting individuals for the tournament \emph{without} replacement, the tournament becomes more diverse. While the positive result for the original RTS easily transfers with the more lax condition~$w \ge \mu$, the population dynamics in the case of small~$w$ become more complicated.
Experiments suggest that for small values of~$w$ typically one niche collapses to a single individual, and we proved that under certain conditions the algorithm is very slow at climbing up said branch. We conjecture a lower time bound of $\Bigom{(\mu^2/w) \cdot  n \log n}$ for small values of~$w$, where it is common for a single ``lone'' individual to evolve. This lower bound is by a factor of $\mu/w$ larger than the upper time bound for RTS with large~$w$ (Theorems~\ref{the:positive-result-for-rts} and~\ref{the:positive-result-for-rts-without-replacement}).
Experiments support this conjecture as for $n=100$ the runtime increases drastically when $w$ is very small. Table~\ref{tab:loneindapp} showed that the average runtime is high when lone individuals emerged, and lone individuals typically emerged for very small~$w$ and small~$\mu$.

Our theoretical results cover small and large values for the window size~$w$. It is still an open problem to theoretically analyse the population dynamics for both RTS variants for intermediate values for~$w$. Our experiments indicate that RTS with and without replacement can optimise \twomax for smaller~$w$ than those required in Theorems~\ref{the:positive-result-for-rts} and~\ref{the:positive-result-for-rts-without-replacement}.
Showing refined upper bounds for smaller values of~$w$ that make this rigorous remains an open problem. Likewise, showing improved lower bounds for larger values of~$w$ than those given in Theorem~\ref{the:badresrts}, or proving the conjectured lower bound of $\Omega((\mu^2/w) n \log n)$ for RTS without replacement remain open problems. Note that proving lower bounds for population-based algorithms is a notoriously hard challenge even in the absence of diversity mechanisms, which may require new methods to be developed (see, e.\,g.~\cite{SuttonWittGECCO19,Oliveto2020} for recent approaches in this direction).


\section*{Acknowledgments}

The authors would like to thank the Consejo Nacional de Ciencia y Tecnolog\'{i}a --- CONACYT (the Mexican National Council for Science and Technology) for the financial support under the grant no.\ 409151 and registration no.\ 264342.

\bibliographystyle{apalike}
\bibliography{bibliography}

\end{document}